\theoremstyle{definition}
\newtheorem{definition}{Definition}[section]
\newtheorem{theorem}{Theorem}[section]
\newtheorem{lemma}{Lemma}[section]
\newtheorem{corollary}{Corollary}[section]
\newcommand{\domain}{\mathcal{D}}
\newcommand{\empdomain}{\widehat{\mathcal{D}}}
\newcommand{\data}{\mathcal{X}}
\newcommand{\hdata}{\mathcal{Z}}
\newcommand{\ydata}{\mathcal{Y}}
\DeclareMathOperator{\sgn}{sgn}
\newcommand{\err}{\varepsilon}
\newcommand{\emperr}{\widehat{\varepsilon}}
\newcommand{\RR}{\mathbb{R}}
\newcommand{\Exp}{\mathbb{E}}
\newcommand{\HH}{\mathcal{H}}
\newcommand{\xx}{\mathbf{x}}
\newcommand{\sample}{\mathbf{S}}
\DeclareMathOperator*{\argmin}{arg\,min}
\newcommand{\defeq}{\vcentcolon=}
\newcommand{\eps}{\varepsilon}
\newcommand{\emrad}{\text{Rad}}
\newcommand{\sigmas}{\pmb\sigma}
\newcommand{\ind}{\mathbb{I}}
\DeclarePairedDelimiterX{\inp}[2]{\langle}{\rangle}{#1, #2}
\newcommand{\djs}{d_{\text{JS}}}
\newcommand{\jsd}{D_{\text{JS}}}
\newcommand{\kl}{D_{\text{KL}}}
\definecolor{dkgreen}{rgb}{0,0.6,0}
\definecolor{gray}{rgb}{0.5,0.5,0.5}
\definecolor{mauve}{rgb}{0.58,0,0.82}
\setlist{nolistsep}
\title{On Learning Invariant Representation for Domain Adaptation}
\author{Han Zhao$^\star$, Remi Tachet des Combes$^\dagger$, Kun Zhang$^\star$, Geoffrey J. Gordon$^{\star,\dagger}$\\Carnegie Mellon University$^\star$, Microsoft Research Montreal$^\dagger$ \\ han.zhao@cs.cmu.edu, kunz1@cmu.edu, \{remi.tachet, geoff.gordon\}@microsoft.com}
\date{}
\begin{document}

\maketitle


\begin{abstract}
    Due to the ability of deep neural nets to learn rich representations, recent advances in unsupervised domain adaptation have focused on learning domain-invariant features that achieve a small error on the source domain. The hope is that the learnt representation, together with the hypothesis learnt from the source domain, can generalize to the target domain. In this paper, we first construct a simple counterexample showing that, contrary to common belief, the above conditions are not sufficient to guarantee successful domain adaptation. In particular, the counterexample exhibits \emph{conditional shift}: the class-conditional distributions of input features change between source and target domains. To give a sufficient condition for domain adaptation, we propose a natural and interpretable generalization upper bound that explicitly takes into account the aforementioned shift. Moreover, we shed new light on the problem by proving an information-theoretic lower bound on the joint error of \emph{any} domain adaptation method that attempts to learn invariant representations. Our result characterizes a fundamental tradeoff between learning invariant representations and achieving small joint error on both domains when the marginal label distributions differ from source to target. Finally, we conduct experiments on real-world datasets that corroborate our theoretical findings. We believe these insights are helpful in guiding the future design of domain adaptation and representation learning algorithms.
\end{abstract}

\section{Introduction}
\label{sec:intro}
The recent successes of supervised deep learning methods have been partially attributed to rich datasets and increasing computational power. However, in many critical applications, e.g., self-driving cars or personal healthcare, it is often prohibitively expensive and time-consuming to collect large-scale supervised training data. Unsupervised domain adaptation (DA) focuses on such limitations by trying to transfer knowledge from a labeled source domain to an unlabeled target domain, and a large body of work tries to achieve this by exploring domain-invariant structures and representations to bridge the gap. Theoretical results~\citep{ben2010theory,mansour2009domain,mansour2012robust} and algorithms~\citep{glorot2011domain,becker2013non,ajakan2014domain,adel2017unsupervised,pei2018multi,zhao2017efficient} under this setting are abundant. 

\begin{figure*}[htb]
    \centering
    \includegraphics[width=\linewidth]{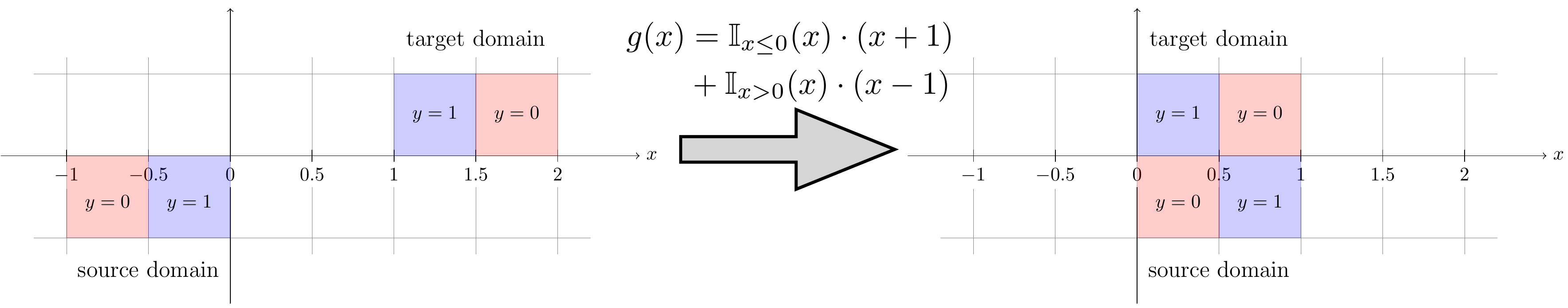}
    \caption{A counterexample where invariant representations lead to large joint error on source and target domains. Before transformation of $g(\cdot)$, $h^*(x) = 1$ iff $x\in (-1/2, 3/2)$ achieves perfect classification on both domains. After transformation, source and target distributions are perfectly aligned, but no hypothesis can achieve a small joint error.}
    \label{fig:example}
\end{figure*}
Due to the ability of deep neural nets to learn rich feature representations, recent advances in domain adaptation have focused on using these networks to learn \emph{invariant representations}, i.e., intermediate features whose distribution is the same in source and target domains, while at the same time achieving small error on the source domain. The hope is that the learnt intermediate representation, together with the hypothesis learnt using labeled data from the source domain, can generalize to the target domain. Nevertheless, from a theoretical standpoint, it is not at all clear whether aligned representations and small source error are sufficient to guarantee good generalization on the target domain. In fact, despite being successfully applied in various applications~\citep{zhang2017aspect,hoffman2017cycada}, it has also been reported that such methods fail to generalize in certain closely related source/target pairs, e.g., digit classification from MNIST to SVHN~\citep{ganin2016domain}.

Given the wide application of domain adaptation methods based on learning invariant representations, we attempt in this paper to answer the following important and intriguing question: 
\begin{quoting}
\itshape
    Is finding invariant representations while at the same time achieving a small source error sufficient to guarantee a small target error? If not, under what conditions is it?
\end{quoting}
Contrary to common belief, we give a negative answer to the above question by constructing a simple example showing that these two conditions are not sufficient to guarantee target generalization, even in the case of perfectly aligned representations between the source and target domains. In fact, our example shows that the objective of learning invariant representations while minimizing the source error can actually be hurtful, in the sense that the better the objective, the larger the target error. At a colloquial level, this happens because learning invariant representations can break the originally favorable underlying problem structure, i.e., close labeling functions and conditional distributions. To understand when such methods work, we propose a generalization upper bound as a sufficient condition that explicitly takes into account the conditional shift between source and target domains. The proposed upper bound admits a natural interpretation and decomposition in domain adaptation; we show that it is tighter than existing results in certain cases. 

Simultaneously, to understand what the necessary conditions for representation based approaches to work are, we prove an information-theoretic lower bound on the joint error of both domains for \emph{any} algorithm based on learning invariant representations. Our result complements the above upper bound and also extends the constructed example to more general settings. The lower bound sheds new light on this problem by characterizing a fundamental tradeoff between learning invariant representations and achieving small joint error on both domains when the marginal label distributions differ from source to target. Our lower bound directly implies that minimizing source error while achieving invariant representation will only increase the target error. We conduct experiments on real-world datasets that corroborate this theoretical implication. Together with the generalization upper bound, our results suggest that adaptation should be designed to align the label distribution as well when learning an invariant representation (c.f. Sec.~\ref{sec:lowerbound}). We believe these insights will be helpful to guide the future design of domain adaptation and representation learning algorithms.

\section{Preliminary}
\label{sec:preliminary}
We first introduce the notations used throughout this paper and review a theoretical model for domain adaptation (DA)~\citep{kifer2004detecting,ben2007analysis,blitzer2008learning,ben2010theory}. 

\textbf{Notations}~~We use $\data$ and $\ydata$ to denote the input and output space, respectively. Similarly, $\hdata$ stands for the representation space induced from $\data$ by a feature transformation $g:\data\mapsto\hdata$. Accordingly, we use $X, Y, Z$ to denote the random variables which take values in $\data, \ydata,\hdata$, respectively. In this work, \emph{domain} corresponds to a distribution $\domain$ on the input space $\data$ and a labeling function $f:\data\to [0, 1]$. In the domain adaptation setting, we use $\langle \domain_S, f_S\rangle$ and $\langle \domain_T, f_T\rangle$ to denote the source and target domains, respectively. A \emph{hypothesis} is a function $h:\data\to \{0, 1\}$. The \emph{error} of a hypothesis $h$ w.r.t.\ the labeling function $f$ under distribution $\domain_S$ is defined as: $\eps_S(h, f)\defeq \Exp_{\xx\sim\domain_S}[|h(\xx) - f(\xx)|]$. When $f$ and $h$ are binary classification functions, this definition reduces to the probability that $h$ disagrees with $f$ under $\domain_S$: $\Exp_{\xx\sim\domain_S}[|h(\xx) - f(\xx)|] = \Exp_{\xx\sim\domain_S}[\ind(f(\xx)\neq h(\xx))] = \Pr_{\xx\sim\domain_S}(f(\xx)\neq h(\xx))$. In this work, we focus on the deterministic setting where the output $Y = f(X)$ is given by a deterministic labeling function $f$ defined on the corresponding domain. For two functions $g$ and $h$ with compatible domains and ranges, we use $h\circ g$ to denote the function composition $h(g(\cdot))$. Other notations will be introduced in the context when necessary.

\subsection{Problem Setup}
We consider the unsupervised domain adaptation problem where the learning algorithm has access to a set of $n$ labeled points $\{(\xx_i, y_i)\}_{i=1}^n\in(\data\times\ydata)^n$ sampled i.i.d.\ from the source domain and a set of unlabeled points $\{\xx_j\}_{j=1}^m\in\data^m$ sampled i.i.d.\ from the target domain. At a colloquial level, the goal of an unsupervised domain adaptation algorithm is to generalize well on the target domain by learning from labeled samples from the source domain as well as unlabeled samples from the target domain. Formally, let the \emph{risk} of hypothesis $h$ be the error of $h$ w.r.t.\ the true labeling function under domain $\mathcal{D}_S$, i.e., $\eps_S(h)\defeq \eps_S(h, f_S)$. As commonly used in computational learning theory, we denote by $\emperr_S(h)$ the empirical risk of $h$ on the source domain. Similarly, we use $\eps_T(h)$ and $\emperr_T(h)$ to mean the true risk and the empirical risk on the target domain. The problem of domain adaptation considered in this work can be stated as: under what conditions and by what algorithms can we guarantee that a small training error $\emperr_S(h)$ implies a small test error $\err_T(h)$? Clearly, this goal is not always possible if the source and target domains are far away from each other. 

\subsection{A Theoretical Model for Domain Adaptation}
To measure the similarity between two domains, it is crucial to define a discrepancy measure between them. To this end,~\citet{ben2010theory} proposed the $\HH$-divergence to measure the distance between two distributions:
\begin{definition}[$\HH$-divergence]
Let $\HH$ be a hypothesis class on input space $\data$, and $\mathcal{A}_{\HH}$ be the collection of subsets of $\data$ that are the support of some hypothesis in $\HH$, i.e., $\mathcal{A}_{\HH}\defeq \{h^{-1}(1)\mid h\in\HH\}$. The distance between two distributions $\domain$ and $\domain'$ based on $\HH$ is: $d_{\HH}(\domain, \domain')\defeq \sup_{A\in\mathcal{A}_{\HH}}|\Pr_{\domain}(A) - \Pr_{\domain'}(A)|$.~\footnote{To be precise,~\citet{ben2007analysis}'s original definition of $\HH$-divergence has a factor of 2, we choose the current definition as the constant factor is inessential.}
\end{definition}

$\HH$-divergence is particularly favorable in the analysis of domain adaptation with binary classification problems, and it had also been generalized to the \emph{discrepancy distance}~\citep{cortes2008sample,mansour2009domain,mansour2009multiple,cortes2014domain} for general loss functions, including the one for regression problems. Both $\HH$-divergence and the discrepancy distance can be estimated using finite unlabeled samples from both domains when $\HH$ has a finite VC-dimension. 

One flexibility of the $\HH$-divergence is that its power on measuring the distance between two distributions can be controlled by the richness of the hypothesis class $\HH$. To see this, first consider the situation where $\HH$ is very restrictive so that it only contains the constant functions $h\equiv 0$ and $h\equiv 1$. In this case, it can be readily verified by the definition that $d_{\HH}(\domain, \domain') = 0,~\forall~\domain, \domain'$. On the other extreme, if $\HH$ contains all the measurable binary functions, then $d_{\HH}(\domain, \domain') = 0$ iff $\domain(\cdot) = \domain'(\cdot)$ almost surely. In this case the $\HH$-divergence reduces to the total variation, or equivalently the $L_1$ distance, between the two distributions. 

Given a hypothesis class $\HH$, we define its symmetric difference w.r.t.\ itself as: $\HH\Delta\HH = \{h(\xx)\oplus h'(\xx)\mid h, h'\in\HH\}$, where $\oplus$ is the \texttt{xor} operation. Let $h^*$ be the optimal hypothesis that achieves the minimum joint risk on both the source and target domains: $h^*\defeq \argmin_{h\in\HH}\err_S(h) + \err_T(h)$, and let $\lambda^*$ denote the joint risk of the optimal hypothesis $h^*$: $\lambda^* \defeq \err_S(h^*) + \err_T(h^*)$. \citet{ben2007analysis} proved the following generalization bound on the target risk in terms of the empirical source risk and the discrepancy between the source and target domains:
\begin{theorem}[\citet{ben2007analysis}]
\label{thm:shai}
Let $\HH$ be a hypothesis space of VC-dimension $d$ and $\empdomain_S$ (resp. $\empdomain_T$) be the empirical distribution induced by a sample of size $n$ drawn from $\domain_S$ (resp. $\domain_T$). Then w.p.\ at least $1 - \delta$, $\forall h\in\HH$, 
\begin{align}
\err_T(h) \leq &~\emperr_S(h) + \frac{1}{2}d_{\HH\Delta\HH}(\empdomain_S, \empdomain_T) + \lambda^* +  O\left(\sqrt{\frac{d\log n + \log(1/\delta)}{n}}\right).
\label{equ:singlebound}
\end{align}
\end{theorem}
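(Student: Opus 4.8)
The plan is to prove the bound in two stages: a deterministic, population-level inequality, followed by a uniform-convergence step that replaces the population quantities by their empirical counterparts.

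\textbf{Stage 1: population bound.} First I would extend the error notation to pairs of hypotheses, writing $\err_D(h,h')\defeq\Exp_{\xx\sim\domain_D}[|h(\xx)-h'(\xx)|]$ for $D\in\{S,T\}$; for binary $h,h'$ this equals $\Pr_{\domain_D}(h(\xx)\neq h'(\xx))$. Two elementary observations drive the argument. (i) A triangle inequality: integrating the pointwise bound $|h-h''|\le|h-h'|+|h'-h''|$ gives $\err_D(h,h'')\le\err_D(h,h')+\err_D(h',h'')$ for all $h,h',h''$. (ii) A change-of-domain estimate: for $h,h'\in\HH$ the disagreement region $\{\xx:h(\xx)\neq h'(\xx)\}$ is the support of $h\oplus h'\in\HH\Delta\HH$, hence lies in $\mathcal{A}_{\HH\Delta\HH}$, so $|\err_S(h,h')-\err_T(h,h')|\le\sup_{A\in\mathcal{A}_{\HH\Delta\HH}}|\Pr_{\domain_S}(A)-\Pr_{\domain_T}(A)|=\tfrac12 d_{\HH\Delta\HH}(\domain_S,\domain_T)$ (the factor $\tfrac12$ reflecting the normalization used in the statement). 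Chaining (i) and (ii) through $f_T$, the joint optimizer $h^*$, and $f_S$:
\begin{align}
\err_T(h)=\err_T(h,f_T) &\le \err_T(h,h^*)+\err_T(h^*,f_T) \le \err_S(h,h^*)+\tfrac12 d_{\HH\Delta\HH}(\domain_S,\domain_T)+\err_T(h^*,f_T)\nonumber\\
&\le \err_S(h)+\err_S(h^*)+\tfrac12 d_{\HH\Delta\HH}(\domain_S,\domain_T)+\err_T(h^*,f_T)=\err_S(h)+\tfrac12 d_{\HH\Delta\HH}(\domain_S,\domain_T)+\lambda^*,\nonumber
\end{align}
which holds for every $h\in\HH$ and is the claimed inequality with population quantities.

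\textbf{Stage 2: empirical version.} I would then invoke the standard Vapnik--Chervonenkis uniform-convergence inequality, applied and combined through a union bound: (a) uniformly over $h\in\HH$, $\err_S(h)\le\emperr_S(h)+O(\sqrt{(d\log n+\log(1/\delta))/n})$; and (b) since $\HH\Delta\HH$ has VC dimension $O(d)$, uniformly over $A\in\mathcal{A}_{\HH\Delta\HH}$ both $|\Pr_{\domain_S}(A)-\Pr_{\empdomain_S}(A)|$ and $|\Pr_{\domain_T}(A)-\Pr_{\empdomain_T}(A)|$ are $O(\sqrt{(d\log n+\log(1/\delta))/n})$, so by a triangle inequality for the supremum distance $d_{\HH\Delta\HH}(\domain_S,\domain_T)\le d_{\HH\Delta\HH}(\empdomain_S,\empdomain_T)+O(\sqrt{(d\log n+\log(1/\delta))/n})$. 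The constant $\lambda^*$ is a population quantity and is left untouched. Substituting (a) and (b) into the Stage-1 inequality yields the theorem.

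The step I expect to be the real obstacle is (b): one must verify that $\mathrm{VCdim}(\HH\Delta\HH)=O(d)$ (it is at most $2d$, via a Sauer-lemma / shattering argument on the symmetric-difference class) and then apply a uniform deviation bound over $\mathcal{A}_{\HH\Delta\HH}$ that controls the source and target empirical measures on the same high-probability event. Everything else is the routine triangle-inequality chase of Stage~1 together with a textbook VC generalization bound, so I would present those compactly.
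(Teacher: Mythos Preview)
The paper does not prove this theorem: Theorem~\ref{thm:shai} is stated as a cited result from \citet{ben2007analysis}, with no accompanying proof in either the main text or the appendix. So there is no ``paper's own proof'' to compare against.

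Your proposal is essentially the standard Ben-David et al.\ argument and is correct in outline. Two small technical remarks. First, note the footnote after Definition~2.1: this paper drops the factor of $2$ from the original $\HH$-divergence, so with the paper's normalization $\sup_{A\in\mathcal{A}_{\HH\Delta\HH}}|\Pr_{\domain_S}(A)-\Pr_{\domain_T}(A)|$ equals $d_{\HH\Delta\HH}(\domain_S,\domain_T)$, not $\tfrac12 d_{\HH\Delta\HH}(\domain_S,\domain_T)$; the $\tfrac12$ in the theorem statement is a leftover from the original normalization and the paper is simply inconsistent here. Second, the parenthetical ``it is at most $2d$'' for $\mathrm{VCdim}(\HH\Delta\HH)$ is not quite the right route: what you actually need (and what Ben-David et al.\ use) is the growth-function bound $\Pi_{\HH\Delta\HH}(n)\le\Pi_\HH(n)^2\le(en/d)^{2d}$, which feeds directly into the uniform-deviation inequality and yields the $O(\sqrt{(d\log n+\log(1/\delta))/n})$ term without ever computing $\mathrm{VCdim}(\HH\Delta\HH)$ explicitly. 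With those adjustments your two-stage plan is exactly the classical proof.
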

The bound depends on $\lambda^*$, the optimal joint risk that can be achieved by the hypotheses in $\HH$. The intuition is the following: if $\lambda^*$ is large, we cannot hope for a successful domain adaptation. Later in Sec.~\ref{sec:lowerbound}, we shall get back to this term to show an information-theoretic lower bound on it for any approach based on learning invariant representations.

Theorem~\ref{thm:shai} is the foundation of many recent works on unsupervised domain adaptation via learning invariant representations~\citep{ajakan2014domain,ganin2016domain,zhao2018adversarial,pei2018multi,zhao2018multiple}. It has also inspired various applications of domain adaptation with adversarial learning, e.g., video analysis~\citep{hoffman2016fcns,shrivastava2016learning,hoffman2017cycada,tzeng2017adversarial}, natural language understanding~\citep{zhang2017aspect,fu2017domain}, speech recognition~\citep{zhao18deep,hosseini2018augmented}, to name a few.

At a high level, the key idea is to learn a rich and parametrized feature transformation $g:\data\mapsto\hdata$ such that the induced source and target distributions (on $\hdata$) are close, as measured by the $\HH$-divergence. We call $g$ an \emph{invariant representation} w.r.t.\ $\HH$ if $d_\HH(\domain_S^g, \domain_T^g) = 0$, where $\domain_S^g/\domain_T^g$ is the induced source/target distribution. At the same time, these algorithms also try to find new hypothesis (on the representation space $\hdata$) to achieve a small empirical error on the source domain. As a whole algorithm, these two procedures corresponds to simultaneously finding invariant representations and hypothesis to minimize the first two terms in the generalization upper bound of Theorem~\ref{thm:shai}. 

\section{Related Work}
\label{sec:related}
A number of adaptation approaches based on learning invariant representations have been proposed in recent years. Although in this paper we mainly focus on using the $\HH$-divergence to characterize the discrepancy between two distributions, other distance measures can be used as well, e.g., the maximum mean discrepancy (MMD)~\citep{long2014transfer,long2015learning,long2016unsupervised}, the Wasserstein distance~\citep{courty2017optimal,courty2017joint,shen2018wasserstein,lee2018minimax}, etc. 

Under the theoretical framework of the $\HH$-divergence, \citet{ganin2016domain} propose a domain adversarial neural network (DANN) to learn the domain invariant features. Adversarial training techniques that aim to build feature representations that are indistinguishable between source and target domains have been proposed in the last few years~\citep{ajakan2014domain,ganin2016domain}. Specifically, one of the central ideas is to use neural networks, which are powerful function approximators, to approximate the $\HH$-divergence between two domains~\citep{kifer2004detecting,ben2007analysis,ben2010theory}. The overall algorithm can be viewed as a zero-sum two-player game: one network tries to learn feature representations that can fool the other network, whose goal is to distinguish the representations generated on the source domain from those generated on the target domain. In a concurrent work,~\citet{johansson2019support} also identified the insufficiency of learning domain-invariant representation for successful adaptation. They further analyzed the information loss of non-invertible transformations, and proposed a generalization upper bound that directly takes it into account. In our work, by showing an information-theoretic lower bound on the joint error of these methods, we show that although invariant representations can be achieved, it does not necessarily translate to good generalization on the target domain, in particular when the label distributions of the two domains differ significantly. 

Causal approaches based on conditional and label shifts for domain adaptation also exist~\citep{zhang2013domain,gong2016domain,lipton2018detecting,azizzadenesheli2018regularized}. One typical assumption made to simplify the analysis in this line of work is that the source and target domains share the same generative distribution and only differ at the marginal label distributions. It is worth noting that~\citet{zhang2013domain} and \citet{gong2016domain} showed that both label and conditional shift can be successfully corrected when the changes in the generative distribution follow some parametric families. In this work we focus on representation learning and do not make such explicit assumptions.

\section{Theoretical Analysis}
\label{sec:theory}
Is finding invariant representations alone a sufficient condition for the success of domain adaptation? Clearly it is not. Consider the following simple counterexample: let $g_{\mathbf{c}}:\data\mapsto\hdata$ be a constant function, where $\forall\xx\in\data$, $g_{\mathbf{c}}(\xx) = \mathbf{c}\in\hdata$. Then for any discrepancy distance $d(\cdot, \cdot)$ over two distributions, including the $\HH$-divergence, MMD, and the Wasserstein distance, and for any distributions $\domain_S, \domain_T$ over the input space $\data$, we have $d(\domain_S^{g_{\mathbf{c}}}, \domain_T^{g_{\mathbf{c}}}) = 0$, where we use $\domain_S^{g_{\mathbf{c}}}$ (resp. $\domain_T^{g_{\mathbf{c}}}$) to mean the induced source (resp. target) distribution by the transformation $g_{\mathbf{c}}$ over the representation space $\hdata$. Furthermore, it is fairly easy to construct source and target domains $\langle\domain_S, f_S\rangle$, $\langle \domain_T, f_T\rangle$, such that for any hypothesis $h:\hdata\mapsto\ydata$, $\err_T(h\circ g_{\mathbf{c}})\geq 1/2$, while there exists a classification function $f:\data\to\ydata$ that achieves small error, e.g., the labeling function. 

One may argue, with good reason, that in the counterexample above, the empirical source error $\emperr_S(h\circ g_{\mathbf{c}})$ is also large with high probability. Intuitively, this is because the simple constant transformation function $g_{\mathbf{c}}$ fails to retain the discriminative information about the classification task at hand, despite the fact that it can construct invariant representations.

Is finding invariant representations and achieving a small source error sufficient to guarantee small target error? In this section we first give a negative answer to this question by constructing a counterexample where there exists a nontrivial transformation function $g:\data\mapsto\hdata$ and hypothesis $h:\hdata\mapsto\ydata$ such that both $\err_S(h\circ g)$ and $d_{\HH\Delta\HH}(\domain_S^g, \domain_T^g)$ are small, while at the same time the target error $\err_T(h\circ g)$ is large. Motivated by this negative result, we proceed to prove a generalization upper bound that explicitly characterizes a sufficient condition for the success of domain adaptation. We then complement the upper bound by showing an information-theoretic lower bound on the joint error of \emph{any} domain adaptation approach based on learning invariant representations.

\subsection{Invariant Representation and Small Source Risk are Not Sufficient}
\label{sec:example}
In this section, we shall construct a simple 1-dimensional example where there exists a function $h^*:\RR\mapsto \{0, 1\}$ that achieves zero error on \emph{both} source and target domains. Simultaneously, we show that there exists a transformation function $g:\RR\mapsto\RR$ under which the induced source and target distributions are perfectly aligned, but \emph{every} hypothesis $h:\RR\mapsto\{0, 1\}$ incurs a large joint error on the induced source and target domains. The latter further implies that if we find a hypothesis that achieves small error on the source domain, then it has to incur a large error on the target domain. We illustrate this example in Fig.~\ref{fig:example}.

Let $\data = \hdata = \RR$ and $\ydata = \{0, 1\}$. For $a \leq b$, we use $U(a, b)$ to denote the uniform distribution over $[a, b]$. Consider the following source and target domains:
\begin{align*}
    & \domain_S = U(-1, 0), && f_S(x) = \begin{cases}
    0, & x \leq -1/2 \\
    1, & x > -1/2
    \end{cases} \\
    & \domain_T = U(1, 2), && f_T(x) = \begin{cases}
    0, & x \geq 3/2 \\
    1, & x < 3/2
    \end{cases}
\end{align*}
In the above example, it is easy to verify that the interval hypothesis $h^*(x) = 1$ iff $x\in (-1/2, 3/2)$ achieves perfect classification on \emph{both} domains. Now consider the following transformation:
\begin{equation*}
    g(x) = \ind_{x\leq 0}(x)\cdot(x + 1) + \ind_{x > 0}(x)\cdot(x - 1).
\end{equation*}
Since $g(\cdot)$ is a piecewise linear function, it follows that $\domain_S^Z = \domain_T^Z = U(0, 1)$, and for any distance metric $d(\cdot, \cdot)$ over distributions, we have $d(\domain_S^Z, \domain_T^Z) = 0$. But now for any hypothesis $h:\RR\mapsto \{0,1\}$, and $\forall x\in[0, 1]$, $h(x)$ will make an error in exactly one of the domains, hence
\begin{equation*}
    \forall h:\RR\mapsto\{0, 1\},\quad \err_S(h\circ g) + \err_T(h\circ g) = 1.
\end{equation*}
In other words, under the above invariant transformation $g$, the smaller the source error, the larger the target error. 

One may argue that this example seems to contradict the generalization upper bound from Theorem~\ref{thm:shai}, where the first two terms correspond exactly to a small source error and an invariant representation. The key to explain this apparent contradiction lies in the third term of the upper bound, $\lambda^*$, i.e., the optimal joint error achievable on \emph{both} domains. In our example, when there is no transformation applied to the input space, we show that $h^*$ achieves 0 error on both domains, hence $\lambda^* = \min_{h\in\HH} \err_S(h) + \err_T(h) = 0$. However, when the transformation $g$ is applied to the original input space, we prove that every hypothesis has joint error 1 on the representation space, hence $\lambda^*_g = 1$. Since we usually do not have access to the optimal hypothesis on both domains, although the generalization bound still holds on the representation space, it becomes vacuous in our example.

An alternative way to interpret the failure of the constructed example is that the labeling functions (or conditional distributions in the stochastic setting) of source and target domains are far away from each other in the representation space. Specifically, in the induced representation space, the optimal labeling function on the source and target domains are:
\begin{equation*}
    f'_S(x) = \begin{cases}
    0, & x \leq 1/2\\
    1, & x > 1/2
    \end{cases},\quad 
    f'_T(x) = \begin{cases}
    0, & x > 1/2 \\
    1, & x \leq 1/2
    \end{cases}, 
\end{equation*}
and we have $||f'_S - f'_T||_1 = \Exp_{x\sim U(0,1)}[|f'_S(x) - f'_T(x)|] = 1$.

\subsection{A Generalization Upper Bound}
\label{sec:upperbound}
For most of the practical hypothesis spaces $\HH$, e.g., half spaces, it is usually intractable to compute the optimal joint error $\lambda^*$ from Theorem~\ref{thm:shai}. Furthermore, the fact that $\lambda^*$ contains errors from both domains makes the bound very conservative and loose in many cases. In this section, inspired by the constructed example from Sec.~\ref{sec:example}, we aim to provide a general, intuitive, and interpretable generalization upper bound for domain adaptation that is free of the pessimistic $\lambda^*$ term. Ideally, the bound should also explicitly characterize how the shift between labeling functions of both domains affects domain adaptation. Due to space constraints, we refer the interested reader to the Appendix for the proofs of our technical lemmas, and mainly focus in the following on interpretations and results.

Because of its flexibility in choosing the witness function class $\HH$ and its natural interpretation as adversarial binary classification, we still adopt the $\HH$-divergence to measure the discrepancy between two distributions. For any hypothesis space $\HH$, it can be readily verified that $d_\HH(\cdot, \cdot)$ satisfies the triangular inequality:
\begin{equation*}
    d_\HH(\domain, \domain')\leq d_\HH(\domain, \domain'') + d_\HH(\domain'', \domain'), 
\end{equation*}
where $\domain, \domain', \domain''$ are any distributions over the same space. We now introduce a technical lemma that will be helpful in proving results related to the $\HH$-divergence:
\begin{restatable}{lemma}{technical}
Let $\HH\subseteq [0,1]^{\data}$ and $\domain, \domain'$ be two distributions over $\data$. Then $\forall h, h'\in\HH$, $|\err_{\domain}(h, h') - \err_{\domain'}(h, h')|\leq d_{\tilde{\HH}}(\domain, \domain')$, where $\tilde{\HH}\defeq \{\sgn(|h(\xx) - h'(\xx)| - t)\mid h,h'\in\HH, 0\leq t \leq 1\}$.
\label{lemma:key}
\end{restatable}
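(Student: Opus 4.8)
The plan is to express the $L_1$-type disagreement $\err_{\domain}(h,h') = \Exp_{\xx\sim\domain}[\abs{h(\xx)-h'(\xx)}]$ through the layer-cake (Fubini) representation of the expectation of a bounded nonnegative random variable, and then to bound the resulting integrand uniformly by the $\tilde{\HH}$-divergence, one threshold at a time.

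First I would set $W(\xx)\defeq \abs{h(\xx)-h'(\xx)}$, which takes values in $[0,1]$ since $h,h'\in[0,1]^{\data}$. The key identity is $\Exp_{\xx\sim\domain}[W(\xx)] = \int_0^1 \Pr_{\xx\sim\domain}(W(\xx) > t)\,dt$, valid for any distribution $\domain$ over $\data$. Applying it to both $\domain$ and $\domain'$ and subtracting gives
\[
\err_{\domain}(h,h') - \err_{\domain'}(h,h') = \int_0^1 \left( \Pr_{\domain}(W > t) - \Pr_{\domain'}(W > t) \right) dt.
\]
Next, for each fixed $t\in[0,1]$, I would observe that the superlevel set $\{\xx : W(\xx) > t\} = \{\xx : \abs{h(\xx)-h'(\xx)} > t\}$ is (up to the measure-zero boundary $\{W = t\}$) exactly the positive region $r^{-1}(1)$ of the hypothesis $r(\xx)\defeq \sgn(\abs{h(\xx)-h'(\xx)} - t)$, which belongs to $\tilde{\HH}$ by definition. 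Hence this set lies in $\mathcal{A}_{\tilde{\HH}}$, so $\abs{\Pr_{\domain}(W>t) - \Pr_{\domain'}(W>t)} \le \sup_{A\in\mathcal{A}_{\tilde{\HH}}}\abs{\Pr_{\domain}(A)-\Pr_{\domain'}(A)} = d_{\tilde{\HH}}(\domain,\domain')$ for every $t$.

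Finally, taking absolute values, pushing them inside the integral via the triangle inequality, and integrating the constant bound over $t\in[0,1]$ yields
\[
\abs{\err_{\domain}(h,h') - \err_{\domain'}(h,h')} \le \int_0^1 d_{\tilde{\HH}}(\domain,\domain')\, dt = d_{\tilde{\HH}}(\domain,\domain'),
\]
which is the claim. The step I expect to require the most care is the measure-theoretic bookkeeping in identifying $\{W > t\}$ with a member of $\mathcal{A}_{\tilde{\HH}}$: one must confirm that letting $t$ range over all of $[0,1]$ in the definition of $\tilde{\HH}$ indeed captures precisely these superlevel sets, and that the sign convention at the point $0$ does not alter the set on a region of positive measure (if it did, one would switch between $>$ and $\ge$, or note the offending set has measure zero). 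Everything else — the layer-cake identity and the invocation of the definition of the $\HH$-divergence — is routine.
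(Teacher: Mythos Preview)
Your proposal is correct and follows essentially the same route as the paper: both proofs write $\err_{\domain}(h,h')$ via the layer-cake/Fubini identity $\Exp[W]=\int_0^1\Pr(W>t)\,dt$, subtract, push the absolute value inside, and then bound each term $|\Pr_{\domain}(W>t)-\Pr_{\domain'}(W>t)|$ by $d_{\tilde\HH}(\domain,\domain')$ using that $\{W>t\}\in\mathcal{A}_{\tilde\HH}$. The only cosmetic difference is that the paper first passes to $\sup_{t\in[0,1]}$ before identifying the bound with $d_{\tilde\HH}$, whereas you integrate the constant bound directly; your worry about the $\sgn$ convention is also moot since with $\sgn(0)=0$ one has $r^{-1}(1)=\{W>t\}$ exactly.
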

As a matter of fact, the above lemma also holds for any function class $\HH$ (not necessarily a hypothesis space) where there exists a constant $M > 0$, such that $||h||_{\infty} \leq M$ for all $h\in\HH$. Another useful lemma is the following triangular inequality:
\begin{restatable}{lemma}{tri}
Let $\HH\subseteq [0,1]^{\data}$ and $\domain$ be any distribution over $\data$. For any $h, h', h''\in\HH$, we have $\err_{\domain}(h, h')\leq \err_{\domain}(h, h'') + \err_{\domain}(h'', h')$.
\label{lemma:tri}
\end{restatable}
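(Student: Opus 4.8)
The plan is to reduce the claimed triangular inequality for the $\eps_\domain(\cdot,\cdot)$ "distance" to the ordinary triangular inequality for the $L_1$ norm on the function space $L_1(\domain)$. Recall that for $h, h' \in \HH \subseteq [0,1]^\data$ we have $\eps_\domain(h, h') = \Exp_{\xx\sim\domain}[|h(\xx) - h'(\xx)|] = \norm{h - h'}_{L_1(\domain)}$, since both functions are real-valued (this is exactly the quantity appearing in the definition of $\eps_S(h,f)$ in the preliminaries, now with $h,h'$ in place of $h,f$). So the statement $\eps_\domain(h,h') \le \eps_\domain(h,h'') + \eps_\domain(h'',h')$ is literally $\norm{h - h'}_{L_1(\domain)} \le \norm{h - h''}_{L_1(\domain)} + \norm{h'' - h'}_{L_1(\domain)}$.

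First I would write $h - h' = (h - h'') + (h'' - h')$ as functions on $\data$, so that pointwise, for every $\xx\in\data$, the ordinary triangle inequality on $\RR$ gives $|h(\xx) - h'(\xx)| \le |h(\xx) - h''(\xx)| + |h''(\xx) - h'(\xx)|$. Then I would take expectations over $\xx \sim \domain$ on both sides; monotonicity and linearity of expectation preserve the inequality and split the right-hand side into the two desired terms. This yields the claim directly. One can phrase the whole argument in two lines without invoking $L_1$ terminology at all — it is purely the pointwise triangle inequality followed by $\Exp_\domain[\cdot]$ — but it is worth noting explicitly that nothing here uses that $h,h',h''$ lie in a hypothesis class of bounded complexity; all that matters is that the three functions are $\domain$-integrable, which is guaranteed since they are bounded in $[0,1]$.

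There is essentially no obstacle: the only thing to be slightly careful about is that $\eps_\domain(h,h')$ must be interpreted as $\Exp_\domain[|h - h'|]$ rather than as a disagreement probability $\Pr_\domain(h \neq h')$ — the latter interpretation is only valid when the functions are $\{0,1\}$-valued, and for $[0,1]$-valued functions it is the absolute-difference form that obeys the triangle inequality (disagreement probability need not). Since the excerpt defines $\eps_\domain$ via the absolute difference, this is consistent, and the proof goes through for all of $\HH \subseteq [0,1]^\data$ as stated. The only genuinely "routine" part is confirming integrability, which is immediate from $|h - h''| \le 1$ and similarly for the other differences.
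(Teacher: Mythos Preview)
Your proposal is correct and matches the paper's proof essentially line for line: the paper writes $h(\xx) - h'(\xx) = (h(\xx) - h''(\xx)) + (h''(\xx) - h'(\xx))$, applies the pointwise triangle inequality inside the expectation, and then uses linearity of expectation to split the right-hand side. Your additional remarks on the $L_1$ interpretation and on integrability are accurate but not needed for the argument as stated.
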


Let $f_S: \data\to [0,1]$ and $f_T:\data\to [0,1]$ be the optimal labeling functions on the source and target domains, respectively. In the stochastic setting, $f_S(\xx) = \Pr_{S}(y=1\mid \xx)$ corresponds to the optimal Bayes classifier. With these notations, the following theorem holds:
\begin{restatable}{theorem}{population}
Let $\langle \domain_S, f_S\rangle$ and $\langle \domain_T, f_T\rangle$ be the source and target domains, respectively. For any function class $\HH\subseteq [0,1]^{\data}$, and $\forall h\in\HH$, the following inequality holds: 
\begin{align*}
\err_T(h) \leq \err_S(h) + d_{\tilde{\HH}}(\domain_S, \domain_T) + \min\{\Exp_{\domain_S}[|f_S - f_T|], \Exp_{\domain_T}[|f_S - f_T|]\}.
\end{align*}
\label{thm:populationbound}
\end{restatable}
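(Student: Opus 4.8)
The plan is to chain the two technical lemmas through a pair of triangle inequalities: Lemma~\ref{lemma:tri} will be used to swap the ``target'' labeling function (introducing the $\Exp[|f_S-f_T|]$ term) and Lemma~\ref{lemma:key} will be used to swap the measuring distribution (introducing the $d_{\tilde{\HH}}$ term). Since the desired bound has a $\min$ of two expectations, I will derive two separate upper bounds and take their minimum at the very end.

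For the bound ending in $\Exp_{\domain_T}[|f_S-f_T|]$: start from $\err_T(h)=\err_{\domain_T}(h,f_T)$ and apply Lemma~\ref{lemma:tri} with the intermediate function $f_S$, giving $\err_{\domain_T}(h,f_T)\le \err_{\domain_T}(h,f_S)+\err_{\domain_T}(f_S,f_T)$; here I note that Lemma~\ref{lemma:tri} is just the pointwise inequality $|h-f_T|\le |h-f_S|+|f_S-f_T|$ integrated against $\domain_T$, hence valid for arbitrary $[0,1]$-valued functions, not only members of $\HH$. The second term is exactly $\Exp_{\domain_T}[|f_S-f_T|]$. For the first term I move from $\domain_T$ to $\domain_S$ using Lemma~\ref{lemma:key} on the pair $(h,f_S)$ (invoking the remark after the lemma that it extends to any uniformly bounded function class, so that $f_S\notin\HH$ is harmless): $|\err_{\domain_T}(h,f_S)-\err_{\domain_S}(h,f_S)|\le d_{\tilde{\HH}}(\domain_S,\domain_T)$, and $\err_{\domain_S}(h,f_S)=\err_S(h)$. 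Combining gives $\err_T(h)\le \err_S(h)+d_{\tilde{\HH}}(\domain_S,\domain_T)+\Exp_{\domain_T}[|f_S-f_T|]$.

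For the symmetric bound I reverse the order of the two moves: first swap the distribution, $\err_T(h)=\err_{\domain_T}(h,f_T)\le \err_{\domain_S}(h,f_T)+d_{\tilde{\HH}}(\domain_S,\domain_T)$ by Lemma~\ref{lemma:key} on the pair $(h,f_T)$; then swap the labeling function under $\domain_S$, $\err_{\domain_S}(h,f_T)\le \err_{\domain_S}(h,f_S)+\err_{\domain_S}(f_S,f_T)=\err_S(h)+\Exp_{\domain_S}[|f_S-f_T|]$ by Lemma~\ref{lemma:tri}. This yields $\err_T(h)\le \err_S(h)+d_{\tilde{\HH}}(\domain_S,\domain_T)+\Exp_{\domain_S}[|f_S-f_T|]$, and taking the minimum of the two displayed bounds (using symmetry of $d_{\tilde{\HH}}$) completes the proof.

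The arithmetic is routine; the one place that needs genuine care is the application of Lemma~\ref{lemma:key} with an argument ($f_S$ or $f_T$) that is a labeling function rather than a hypothesis in $\HH$. I handle this via the lemma's extension to uniformly bounded function classes, i.e.\ by reading $\tilde{\HH}$ as generated by $\HH$ together with the labeling functions (under a realizability assumption $f_S,f_T\in\HH$ nothing extra is needed). A secondary bookkeeping point is choosing the correct ``intermediate'' function and the correct order of the two moves in each derivation, since that is precisely what controls whether $\Exp_{\domain_S}[|f_S-f_T|]$ or $\Exp_{\domain_T}[|f_S-f_T|]$ shows up, hence whether we land on each side of the $\min$.
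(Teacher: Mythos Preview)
Your proposal is correct and matches the paper's proof essentially step for step: both derive two upper bounds by chaining Lemma~\ref{lemma:key} (distribution swap) and Lemma~\ref{lemma:tri} (labeling-function swap) in the two possible orders, then take the minimum. Your explicit flag about needing $f_S,f_T$ to fall under the scope of Lemma~\ref{lemma:key} (via the bounded-function-class remark or a realizability assumption) is a point the paper glosses over, so your write-up is if anything slightly more careful.
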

\textbf{Remark}~~The three terms in the upper bound have natural interpretations: the first term is the source error, the second one corresponds to the discrepancy between the marginal distributions, and the third measures the distance between the labeling functions from the source and target domains. Altogether, they form a sufficient condition for the success of domain adaptation: besides a small source error, not only do the marginal distributions need to be close, but so do the labeling functions. 

\textbf{Comparison with Theorem~\ref{thm:shai}}.~It is instructive to compare the bound in Theorem~\ref{thm:populationbound} with the one in Theorem~\ref{thm:shai}. The main difference lies in the $\lambda^*$ in Theorem~\ref{thm:shai} and the $\min\{\Exp_{\domain_S}[|f_S - f_T|], \Exp_{\domain_T}[|f_S - f_T|]\}$ in Theorem~\ref{thm:populationbound}. $\lambda^*$ depends on the choice of the hypothesis class $\HH$, while our term does not. In fact, our quantity reflects the underlying structure of the problem, i.e., the conditional shift. Finally, consider the example given in the left panel of Fig.~\ref{fig:example}. It is easy to verify that we have $\min\{\Exp_{\domain_S}[|f_S - f_T|], \Exp_{\domain_T}[|f_S - f_T|]\} = 1/2$ in this case, while for a natural class of hypotheses, i.e., $\HH\defeq \{h(x) = 0 \Leftrightarrow a \leq x \leq b~|~a < b\}$, we have $\lambda^* = 1$. In that case, our bound is tighter than the one in Theorem~\ref{thm:shai}.

In the covariate shift setting, where we assume the conditional distributions of $Y\mid X$ between the source and target domains are the same, the third term in the upper bound vanishes. In that case the above theorem says that to guarantee successful domain adaptation, it suffices to match the marginal distributions while achieving small error on the source domain. In general settings where the optimal labeling functions of the source and target domains differ, the above bound says that it is not sufficient to simply match the marginal distributions and achieve small error on the source domain. At the same time, we should also guarantee that the optimal labeling functions (or the conditional distributions of both domains) are not too far away from each other. As a side note, it is easy to see that $\Exp_{\domain_S}[|f_S - f_T|] = \err_S(f_T)$ and $\Exp_{\domain_T}[|f_S - f_T|] = \err_T(f_S)$. In other words, they are essentially the cross-domain errors. When the cross-domain error is small, it implies that the optimal source (resp. target) labeling function generalizes well on the target (resp. source) domain. 

Both the error term $\err_S(h)$ and the divergence $d_{\tilde{H}}(\domain_S, \domain_T)$ in Theorem~\ref{thm:populationbound} are with respect to the true underlying distributions $\domain_S$ and $\domain_T$, which are not available to us during training. In the following, we shall use the Rademacher complexity to provide for both terms a data-dependent bound from empirical samples from $\domain_S$ and $\domain_T$.
\begin{definition}[Empirical Rademacher Complexity]
Let $\HH$ be a family of functions mapping from $\data$ to $[a,b]$ and $\sample = \{\xx_i\}_{i=1}^n$ a fixed sample of size $n$ with elements in $\data$. Then, the \emph{empirical Rademacher complexity} of $\HH$ with respect to the sample $X$ is defined as
\begin{equation*}
    \emrad_{\sample}(\HH) \defeq \Exp_{\sigmas}\bigg[\sup_{h\in\HH}\frac{1}{n}\sum_{i=1}^n \sigma_i h(\xx_i)\bigg],
\end{equation*}
where $\sigmas = \{\sigma_i\}_{i=1}^n$ and $\sigma_i$ are i.i.d.\ uniform random variables taking values in $\{+1, -1\}$. 
\end{definition}
With the empirical Rademacher complexity, we can show that w.h.p., the empirical source error $\emperr_S(h)$ cannot be too far away from the population error $\err_S(h)$ for all $h\in \HH$:
\begin{restatable}{lemma}{source}
Let $\HH\subseteq [0,1]^\data$, then for all $\delta > 0$, w.p.\ at least $1-\delta$, the following inequality holds for all $h\in\HH$: $\err_S(h) \leq \emperr_S(h) + 2\emrad_\sample(\HH) + 3\sqrt{\log(2/\delta)/2n}$.
\label{lemma:source}
\end{restatable}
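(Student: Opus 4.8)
The plan is to read $\err_S(h)=\Exp_{\domain_S}[\ell_h]$ and $\emperr_S(h)=\frac1n\sum_{i=1}^n\ell_h(\xx_i)$ as the population and empirical means of the loss $\ell_h(\xx)\defeq|h(\xx)-f_S(\xx)|$ (in the deterministic setting of this paper $y_i=f_S(\xx_i)$, so this matches the definition of the empirical source risk), apply the standard Rademacher-complexity uniform convergence bound to the induced loss class $\mathcal{L}\defeq\{\xx\mapsto|h(\xx)-f_S(\xx)|\mid h\in\HH\}$, and then strip the loss via a contraction inequality so that the complexity term becomes $\emrad_\sample(\HH)$. Note $f_S$ is a fixed function, so $\mathcal{L}$ is indexed only by $h\in\HH$.

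First I would observe that since $\HH\subseteq[0,1]^\data$ and $f_S:\data\to[0,1]$, every $\ell_h$ takes values in $[0,1]$, hence $\mathcal{L}\subseteq[0,1]^\data$. For a $[0,1]$-valued function class the textbook argument---McDiarmid's bounded-differences inequality applied to $\Phi(\sample)\defeq\sup_{h\in\HH}\bigl(\err_S(h)-\emperr_S(h)\bigr)$, whose value changes by at most $1/n$ under a single-point change of $\sample$; symmetrization to introduce Rademacher variables, giving $\Exp[\Phi(\sample)]\le 2\,\rad_n(\mathcal{L})$ with $\rad_n(\mathcal{L})=\Exp_\sample[\emrad_\sample(\mathcal{L})]$; and a second McDiarmid step to pass from $\rad_n(\mathcal{L})$ to the empirical $\emrad_\sample(\mathcal{L})$, together with a union bound over two failure events each of probability at most $\delta/2$---yields, with probability at least $1-\delta$,
\[
\err_S(h)\le\emperr_S(h)+2\,\emrad_\sample(\mathcal{L})+3\sqrt{\frac{\log(2/\delta)}{2n}},\qquad\forall h\in\HH.
\]
I would cite this bound rather than reprove it.

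It then remains to show $\emrad_\sample(\mathcal{L})\le\emrad_\sample(\HH)$. For each fixed index $i$, the map $u\mapsto|u-f_S(\xx_i)|$ is $1$-Lipschitz, so by Talagrand's contraction lemma for Rademacher averages (the Ledoux--Talagrand composition inequality, in the form that permits index-dependent Lipschitz maps and introduces no extra constant when the Lipschitz constant is $1$),
\[
\Exp_{\sigmas}\Bigl[\sup_{h\in\HH}\frac1n\sum_{i=1}^n\sigma_i\,|h(\xx_i)-f_S(\xx_i)|\Bigr]\le\Exp_{\sigmas}\Bigl[\sup_{h\in\HH}\frac1n\sum_{i=1}^n\sigma_i\,h(\xx_i)\Bigr],
\]
i.e.\ $\emrad_\sample(\mathcal{L})\le\emrad_\sample(\HH)$. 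Plugging this into the previous display completes the proof.

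The only step that needs care is the contraction: one must appeal to a version of the contraction lemma valid for the per-index $1$-Lipschitz maps $\phi_i(u)=|u-f_S(\xx_i)|$ that does not require $\phi_i(0)=0$ and does not pick up the spurious factor of $2$ present in formulations that take the absolute value inside the supremum---the definition of $\emrad_\sample$ used here has no such absolute value, which is exactly what makes the clean form applicable. Checking that the constant $3$ and the $\log(2/\delta)$ in the conclusion match the precise statement of the uniform convergence theorem invoked is likewise routine bookkeeping.
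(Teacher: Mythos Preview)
Your proposal is correct and follows essentially the same route as the paper: apply the Bartlett--Mendelson Rademacher bound (the paper's Lemma~B.1) to the loss class $\mathcal{L}=\{|h-f_S|:h\in\HH\}$, then use Ledoux--Talagrand contraction to reduce $\emrad_\sample(\mathcal{L})$ to $\emrad_\sample(\HH)$. The only cosmetic difference is that the paper organizes the contraction step in two pieces---first showing translation invariance $\emrad_\sample(\HH-f_S)=\emrad_\sample(\HH)$ via $\Exp_{\sigmas}[\sigma_i f_S(\xx_i)]=0$, then applying the vanilla contraction lemma with the single map $\phi(t)=|t|$---which sidesteps the per-index version and the $\phi_i(0)\neq0$ caveat you flagged.
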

Similarly, for any distribution $\domain$ over $\data$, let $\empdomain$ be its empirical distribution from sample $\sample\sim\domain^n$ of size $n$. Then for any two distributions $\domain$ and $\domain'$, we can also use the empirical Rademacher complexity to provide a data-dependent bound for the perturbation between $d_\HH(\domain, \domain')$ and $d_\HH(\empdomain, \empdomain')$:
\begin{restatable}{lemma}{ddd}
Let $\tilde{\HH}$, $\domain$ and $\empdomain$ be defined above, then for all $\delta > 0$, w.p.\ at least $1-\delta$, the following inequality holds for all $h\in\tilde{\HH}$: $\Exp_\domain[\ind_h] \leq \Exp_{\widehat{\domain}}[\ind_h] + 2\emrad_\sample(\tilde{\HH}) + 3\sqrt{\log(2/\delta)/2n}$.
\label{lemma:ind}
\end{restatable}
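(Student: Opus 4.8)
The plan is to obtain Lemma~\ref{lemma:ind} as a direct instance of the classical Rademacher-complexity generalization bound for classes of $[0,1]$-valued functions --- the same tool used to prove Lemma~\ref{lemma:source} --- applied this time to the Boolean class $\tilde\HH$, viewing each $h\in\tilde\HH$ as its own indicator $\ind_h$ so that $\Exp_\domain[\ind_h]=\Pr_\domain(h^{-1}(1))$ is exactly the quantity appearing inside the supremum that defines $d_{\tilde\HH}(\domain,\domain')$. Since every such $\ind_h$ takes values in $[0,1]$, the hypotheses of that theorem are satisfied; the only work is to unpack its standard two-step (McDiarmid plus symmetrization) proof and keep track of constants.

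Concretely, I would first introduce the one-sided supremum deviation $\Phi(\sample)\defeq\sup_{h\in\tilde\HH}\big(\Exp_\domain[\ind_h]-\Exp_{\widehat{\domain}}[\ind_h]\big)$, where $\widehat{\domain}$ is the empirical distribution of $\sample=\{\xx_i\}_{i=1}^n$. Replacing a single sample point changes each empirical average $\frac1n\sum_i\ind_h(\xx_i)$ by at most $1/n$, hence changes $\Phi$ by at most $1/n$, so McDiarmid's bounded-differences inequality gives $\Phi(\sample)\le\Exp_\sample[\Phi(\sample)]+\sqrt{\log(2/\delta)/2n}$ with probability at least $1-\delta/2$. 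Next I would symmetrize, via a ghost sample and Jensen's inequality, to get $\Exp_\sample[\Phi(\sample)]\le 2\,\Exp_{\sample,\sigmas}\big[\sup_{h\in\tilde\HH}\frac1n\sum_{i=1}^n\sigma_i\ind_h(\xx_i)\big]$, i.e.\ twice the \emph{expected} Rademacher complexity of $\tilde\HH$. Then, since the empirical Rademacher complexity $\emrad_\sample(\tilde\HH)$ is itself a $1/n$-bounded-differences functional of $\sample$, a second application of McDiarmid replaces the expected by the empirical Rademacher complexity at a further cost of $\sqrt{\log(2/\delta)/2n}$, again with probability at least $1-\delta/2$. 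A union bound over the two failure events finishes the argument: the first step contributes one copy of $\sqrt{\log(2/\delta)/2n}$ and the last step, after the factor $2$ multiplying the Rademacher term, contributes two more, which is the source of the constant $3$ in the statement.

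I do not expect any real obstacle: the statement is a textbook corollary of uniform-convergence machinery, and in fact it is Lemma~\ref{lemma:source} verbatim with $\tilde\HH$ in place of $\HH$, so one could simply cite that lemma once $\ind_h$ is recognized as a $[0,1]$-valued function. The only points meriting a moment's care are bookkeeping: making explicit that $\Exp_\domain[\ind_h]$ is an expectation of a bounded ($[0,1]$-valued) function so the theorem applies as stated, and fixing the convention that the members of $\tilde\HH$ are genuinely $\{0,1\}$-valued (the convention under which $d_{\tilde\HH}$ is defined via $\mathcal{A}_{\tilde\HH}$), so that $\ind_h\equiv h$ and the Rademacher term in the conclusion is exactly $\emrad_\sample(\tilde\HH)$ rather than a rescaled version of it. Everything else is the routine McDiarmid-plus-symmetrization calculation, which I would relegate to the appendix.
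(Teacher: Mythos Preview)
Your proposal is correct and matches the paper's approach: the paper's proof is the one-line observation that since $\ind_h\in\{0,1\}$, the lemma follows directly from the Bartlett--Mendelson bound (Lemma~\ref{lemma:emrad}), which is exactly the ``one could simply cite that lemma'' option you identify. Your additional unpacking of that lemma via McDiarmid and symmetrization is more detail than the paper provides, but it is the standard proof of the cited result and not a different route.
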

Since $\tilde{\HH}$ is a hypothesis class, by definition we have:
\begin{align*}
    d_{\tilde{\HH}}(\domain, \empdomain) = \sup_{A\in\mathcal{A}_{\tilde{\HH}}}|\Pr_{\domain}(A) - \Pr_{\widehat{\domain}}(A)| = \sup_{h\in\tilde{\HH}}|\Exp_{\domain}[\ind_h] - \Exp_{\widehat{\domain}}[\ind_h]|.
\end{align*}
Hence combining the above identity with Lemma~\ref{lemma:ind}, we immediately have w.p.\ at least $1-\delta$:
\begin{equation}
    d_{\tilde{\HH}}(\domain, \empdomain) \leq 2\emrad_\sample(\tilde{\HH}) + 3\sqrt{\log(2/\delta)/2n}.
\end{equation}
Now use a union bound and the fact that $d_{\tilde{\HH}}(\cdot, \cdot)$ satisfies the triangle inequality, we have:
\begin{restatable}{lemma}{hdiv}
Let $\tilde{\HH}$, $\domain, \domain'$ and $\empdomain, \empdomain'$ be defined above, then for $\forall \delta > 0$, w.p.\ at least $1-\delta$, for $\forall h\in\tilde{\HH}$: 
\begin{equation*}
    d_{\tilde{\HH}}(\domain, \domain') \leq d_{\tilde{\HH}}(\empdomain, \empdomain') + 4\emrad_\sample(\tilde{\HH}) + 6\sqrt{\log(4/\delta)/2n}.
\end{equation*}
\label{lemma:hdiv}
\end{restatable}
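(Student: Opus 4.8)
The plan is to assemble the lemma from two ingredients already in hand: the triangle inequality for $d_{\tilde{\HH}}$ (valid for any hypothesis class, and $\tilde{\HH}$ is one) together with the concentration inequality~(2) that was just derived from Lemma~\ref{lemma:ind}. First I would insert the two empirical distributions as intermediate points and apply the triangle inequality twice:
\begin{equation*}
    d_{\tilde{\HH}}(\domain, \domain') \leq d_{\tilde{\HH}}(\domain, \empdomain) + d_{\tilde{\HH}}(\empdomain, \empdomain') + d_{\tilde{\HH}}(\empdomain', \domain').
\end{equation*}
This reduces the claim to controlling the two ``sampling'' terms $d_{\tilde{\HH}}(\domain, \empdomain)$ and $d_{\tilde{\HH}}(\empdomain', \domain')$, since $d_{\tilde{\HH}}(\empdomain, \empdomain')$ already appears on the right-hand side of the target inequality.

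Next, for each of the two sampling terms I would invoke inequality~(2) — which bounds $d_{\tilde{\HH}}(\domain, \empdomain)$ by $2\emrad_\sample(\tilde{\HH}) + 3\sqrt{\log(2/\delta)/2n}$ with probability at least $1-\delta$ — but with the failure probability set to $\delta/2$ rather than $\delta$, so that $\log(2/\delta)$ becomes $\log(4/\delta)$. Thus with probability at least $1-\delta/2$ we get $d_{\tilde{\HH}}(\domain, \empdomain) \leq 2\emrad_\sample(\tilde{\HH}) + 3\sqrt{\log(4/\delta)/2n}$, and with probability at least $1-\delta/2$ the analogous bound holds for $d_{\tilde{\HH}}(\empdomain', \domain')$. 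A union bound over these two events gives that both hold simultaneously with probability at least $1-\delta$.

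Plugging both bounds into the triangle inequality then yields, with probability at least $1-\delta$,
\begin{equation*}
    d_{\tilde{\HH}}(\domain, \domain') \leq d_{\tilde{\HH}}(\empdomain, \empdomain') + 4\emrad_\sample(\tilde{\HH}) + 6\sqrt{\log(4/\delta)/2n},
\end{equation*}
which is exactly the statement. There is no substantive obstacle here: the proof is a triangle-inequality-plus-union-bound assembly of results already established. The only points requiring care are the bookkeeping — splitting the confidence budget evenly between the two sampling events, which is the source of the $4/\delta$ inside the logarithm and of the doubled constants $4$ and $6$ — and the minor notational matter that the Rademacher term in~(2) is, strictly speaking, taken with respect to the sample drawn from $\domain$ (resp.\ $\domain'$); writing a single $\emrad_\sample(\tilde{\HH})$ amounts to treating $\sample$ as a generic sample of size $n$ or, if one prefers, to upper-bounding both empirical Rademacher complexities by the larger of the two.
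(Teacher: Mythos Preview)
Your proposal is correct and follows essentially the same approach as the paper: triangle inequality to insert the empirical distributions, then inequality~(2) (equivalently Lemma~\ref{lemma:ind}) applied at confidence level $\delta/2$ to each sampling term, combined by a union bound. The paper's proof is identical in structure and does not address the Rademacher-sample bookkeeping point you raise.
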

Combine Lemma~\ref{lemma:source}, Lemma~\ref{lemma:hdiv} and Theorem~\ref{thm:populationbound} with a union bound argument, we get the following main theorem that characterizes an upper bound for domain adaptation:
\begin{restatable}{theorem}{main}
Let $\langle \domain_S, f_S\rangle$ and $\langle \domain_T, f_T\rangle$ be the source and target domains, and let $\empdomain_S, \empdomain_T$ be the empirical source and target distributions constructed from sample $\sample = \{\sample_S, \sample_T\}$, each of size $n$. Then for any $\HH\subseteq [0,1]^{\data}$ and $\forall h\in\HH$:
\begin{align*}
    \err_T(h) \leq &~ \emperr_S(h) + d_{\tilde{\HH}}(\empdomain_S, \empdomain_T) + 2\emrad_\sample(\HH) + 4\emrad_\sample(\tilde{\HH}) + \min\{\Exp_{\domain_S}[|f_S - f_T|], \Exp_{\domain_T}[|f_S - f_T|]\} \\
    & + O\left(\sqrt{\log(1/\delta)/n}\right),
\end{align*}
where $\tilde{\HH}\defeq\{\sgn(|h(\xx) - h'(\xx)| - t)| h,h'\in\HH, t \in[0, 1]\}$.
\label{thm:main}
\end{restatable}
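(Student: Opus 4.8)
The plan is to assemble Theorem~\ref{thm:main} by combining the population-level bound of Theorem~\ref{thm:populationbound} with the two concentration lemmas (Lemma~\ref{lemma:source} and Lemma~\ref{lemma:hdiv}) via a union bound. Concretely, I would start from Theorem~\ref{thm:populationbound}, which already gives, for every $h\in\HH$,
\begin{equation*}
\err_T(h) \leq \err_S(h) + d_{\tilde\HH}(\domain_S, \domain_T) + \min\{\Exp_{\domain_S}[|f_S - f_T|], \Exp_{\domain_T}[|f_S - f_T|]\},
\end{equation*}
and then replace the two non-observable quantities $\err_S(h)$ and $d_{\tilde\HH}(\domain_S,\domain_T)$ by their empirical counterparts plus slack terms. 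The term $\min\{\cdot,\cdot\}$ involving the labeling functions is left as is, since it is a property of the problem and not estimated from the sample.

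The key steps, in order, are as follows. First I would invoke Lemma~\ref{lemma:source} with confidence parameter $\delta/2$, obtaining w.p.\ at least $1-\delta/2$ that $\err_S(h) \leq \emperr_S(h) + 2\emrad_\sample(\HH) + 3\sqrt{\log(4/\delta)/2n}$ for all $h\in\HH$ simultaneously. Second, I would invoke Lemma~\ref{lemma:hdiv} with confidence parameter $\delta/2$, obtaining w.p.\ at least $1-\delta/2$ that $d_{\tilde\HH}(\domain_S,\domain_T) \leq d_{\tilde\HH}(\empdomain_S,\empdomain_T) + 4\emrad_\sample(\tilde\HH) + 6\sqrt{\log(8/\delta)/2n}$. (Here I am passing $\sample_S$ and $\sample_T$ as the two samples of size $n$ each; the statement of Lemma~\ref{lemma:hdiv} is already phrased for a pair of distributions and their empirical versions.) Third, a union bound over these two events gives that both hold simultaneously w.p.\ at least $1-\delta$. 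Fourth, I substitute both inequalities into the population bound from Theorem~\ref{thm:populationbound}, collect the two Rademacher terms $2\emrad_\sample(\HH)$ and $4\emrad_\sample(\tilde\HH)$, and absorb the remaining additive $\sqrt{\log(\cdot)/n}$ contributions into a single $O(\sqrt{\log(1/\delta)/n})$ term (the constants inside the logarithms, $4$ and $8$, only change the hidden constant). This yields exactly the claimed inequality.

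There is no deep obstacle here; the theorem is essentially a bookkeeping consolidation of results proved earlier in the section. The one point requiring a little care is the union-bound accounting: one must make sure the two confidence budgets sum to $\delta$ and that the $\log(2/\delta)$ appearing in Lemma~\ref{lemma:source} and the $\log(4/\delta)$ in Lemma~\ref{lemma:hdiv} become $\log(4/\delta)$ and $\log(8/\delta)$ respectively after the split, all of which are $O(\log(1/\delta))$. A second minor subtlety is that Lemma~\ref{lemma:hdiv} controls $d_{\tilde\HH}$, the divergence with respect to the \emph{derived} class $\tilde\HH = \{\sgn(|h-h'|-t)\mid h,h'\in\HH, t\in[0,1]\}$, and Theorem~\ref{thm:populationbound} is likewise stated in terms of $d_{\tilde\HH}$, so the classes match and no extra step is needed to relate $\emrad_\sample(\tilde\HH)$ to $\emrad_\sample(\HH)$. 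Finally, since all of Theorem~\ref{thm:populationbound}, Lemma~\ref{lemma:source}, and Lemma~\ref{lemma:hdiv} hold \emph{uniformly} over $h\in\HH$ (and $\tilde\HH$), the resulting bound is also uniform over $h\in\HH$, which is what the statement asserts.
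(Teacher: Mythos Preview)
Your proposal is correct and follows essentially the same approach as the paper: start from the population bound of Theorem~\ref{thm:populationbound}, replace $\err_S(h)$ and $d_{\tilde\HH}(\domain_S,\domain_T)$ by their empirical versions using Lemma~\ref{lemma:source} and Lemma~\ref{lemma:hdiv}, and combine via a union bound. Your write-up is in fact more careful about the confidence-budget bookkeeping than the paper's own proof, which simply says ``apply Lemma~\ref{lemma:source} and Lemma~\ref{lemma:hdiv}'' and ``use a union bound.''
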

Essentially, the generalization upper bound can be decomposed into three parts: the first part comes from the domain adaptation setting, including the empirical source error, the empirical $\HH$-divergence, and the shift between labeling functions. The second part corresponds to complexity measures of our hypothesis space $\HH$ and $\tilde{\HH}$, and the last part describes the error caused by finite samples.

\subsection{An Information-Theoretic Lower Bound}
\label{sec:lowerbound}
In Sec.~\ref{sec:example}, we constructed an example to demonstrate that learning invariant representations could lead to a feature space where the joint error on both domains is large. In this section, we extend the example by showing that a similar result holds in more general settings. Specifically, we shall prove that for \emph{any} approach based on learning invariant representations, there is an intrinsic lower bound on the joint error of source and target domains, due to the discrepancy between their marginal label distributions. Our result hence highlights the need to take into account task related information when designing domain adaptation algorithms based on learning invariant representations.

Before we proceed to the lower bound, we first define several information-theoretic concepts that will be used in the analysis. For two distributions $\domain$ and $\domain'$, the Jensen-Shannon (JS) divergence $\jsd(\domain~||~\domain')$ is defined as:
\begin{equation*}
    \jsd(\domain~||~\domain') \defeq \frac{1}{2}\kl(\domain~||~\domain_M) + \frac{1}{2}\kl(\domain'~||~\domain_M),
\end{equation*}
where $\kl(\cdot~||~\cdot)$ is the Kullback–Leibler (KL) divergence and $\domain_M\defeq (\domain + \domain') / 2$. The JS divergence can be viewed as a symmetrized and smoothed version of the KL divergence, and it is closely related to the $L_1$ distance between two distributions through Lin's lemma~\citep{lin1991divergence}.

Unlike the KL divergence, the JS divergence is bounded: $0 \leq \jsd(\domain~||~\domain') \leq 1$. Additionally, from the JS divergence, we can define a distance metric between two distributions as well, known as the JS distance~\citep{endres2003new}:
\begin{equation*}
    \djs(\domain, \domain')\defeq\sqrt{\jsd(\domain~||~\domain')}.
\end{equation*}
With respect to the JS distance and for any (stochastic) mapping $h:\hdata\mapsto\ydata$, we can prove the following lemma via the celebrated data processing inequality:
\begin{restatable}{lemma}{dpi}
Let $\domain^Z_S$ and $\domain^Z_T$ be two distributions over $\hdata$ and let $\domain^Y_S$ and $\domain^Y_T$ be the induced distributions over $\ydata$ by function $h:\hdata\mapsto\ydata$, then
\begin{equation}
    \djs(\domain^Y_S, \domain^Y_T) \leq \djs(\domain^Z_S, \domain^Z_T).
\end{equation}
\label{lemma:jsd}
\end{restatable}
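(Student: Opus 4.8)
The plan is to reduce the claim to the data processing inequality for mutual information, using the standard variational characterization of the Jensen--Shannon divergence as a mutual information. First I would introduce an auxiliary Bernoulli random variable $T$ with $\Pr(T=0)=\Pr(T=1)=1/2$, and define a random variable $Z$ taking values in $\hdata$ via the mixture $Z\mid (T=0)\sim\domain^Z_S$ and $Z\mid(T=1)\sim\domain^Z_T$. A direct computation then shows $I(T;Z)=\jsd(\domain^Z_S\,||\,\domain^Z_T)$: the marginal law of $Z$ is $\domain_M \defeq (\domain^Z_S+\domain^Z_T)/2$, and expanding the mutual information as $I(T;Z)=\tfrac12\kl(\domain^Z_S\,||\,\domain_M)+\tfrac12\kl(\domain^Z_T\,||\,\domain_M)$ recovers exactly the definition of $\jsd$ given above.

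Next I would feed $Z$ through the (possibly stochastic) map $h$ to obtain $Y=h(Z)$ on $\ydata$, where the internal randomness of $h$ is drawn independently of $T$. This makes $T\to Z\to Y$ a Markov chain, and the conditional law of $Y$ given $T=0$ (resp. $T=1$) is precisely the pushforward $\domain^Y_S$ (resp. $\domain^Y_T$). Hence the same computation as before yields $I(T;Y)=\jsd(\domain^Y_S\,||\,\domain^Y_T)$.

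I would then invoke the data processing inequality: for the Markov chain $T\to Z\to Y$, $I(T;Y)\le I(T;Z)$. Combining the three identities gives $\jsd(\domain^Y_S\,||\,\domain^Y_T)\le\jsd(\domain^Z_S\,||\,\domain^Z_T)$, and since $t\mapsto\sqrt{t}$ is monotonically increasing on $[0,\infty)$, taking square roots of both sides produces $\djs(\domain^Y_S,\domain^Y_T)\le\djs(\domain^Z_S,\domain^Z_T)$, which is the claim. An equivalent and equally short route is to observe that $\jsd$ is an $f$-divergence and therefore monotone under Markov kernels; the mutual-information argument above is simply the most self-contained way to make that precise in our deterministic-labeling notation.

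The main obstacle I anticipate is one of care rather than depth: making the reduction airtight when $h$ is genuinely stochastic, i.e., checking that the conditional distribution of $Y$ given $T$ really equals the pushforward measure and that the independence of $T$ from the internal randomness of $h$ secures the Markov property $T\to Z\to Y$. On the measure-theoretic side there is no subtlety, since the dominating measure $\domain_M$ guarantees all the relevant Radon--Nikodym derivatives exist, so each $\jsd$ term is finite and lies in $[0,1]$ and no degenerate case can arise. Everything else is a one-line application of standard information theory.
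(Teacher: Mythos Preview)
Your proposal is correct and mirrors the paper's own proof almost exactly: introduce a fair Bernoulli selector, identify $\jsd$ with the mutual information between the selector and the mixture variable, invoke the data processing inequality along the Markov chain, and take square roots. The only cosmetic differences are notation ($T$ versus the paper's $B$) and your added remarks on the stochastic case and the $f$-divergence interpretation.
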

For methods that aim to learn invariant representations for domain adaptation, an intermediate representation space $\hdata$ is found through feature transformation $g$, based on which a common hypothesis $h:\hdata\mapsto\ydata$ is shared between both domains~\citep{ganin2016domain,tzeng2017adversarial,zhao2018adversarial}. Through this process, the following Markov chain holds:
\begin{equation}
    X \overset{g}{\longrightarrow} Z \overset{h}{\longrightarrow} \hat{Y},
\end{equation}
where $\hat{Y} = h(g(X))$ is the predicted random variable of interest. Hence for any distribution $\domain$ over $\data$, this Markov chain also induces a distribution $\domain^Z$ over $\hdata$ and $\domain^{\hat{Y}}$ over $\ydata$. By Lemma~\ref{lemma:jsd}, we know that $\djs(\domain^{\hat{Y}}_S, \domain^{\hat{Y}}_T) \leq \djs(\domain^Z_S, \domain^Z_T)$. With these notations, noting that the JS distance is a metric, the following inequality holds:
\begin{equation*}
    \djs(\domain_S^Y, \domain_T^Y) \leq \djs(\domain_S^Y, \domain_S^{\hat{Y}}) + \djs(\domain_S^{\hat{Y}}, \domain_T^{\hat{Y}}) + \djs(\domain_T^{\hat{Y}}, \domain_T^Y).
\end{equation*}
Combining the above inequality with Lemma~\ref{lemma:jsd}, we immediately have:
\begin{align}
    \djs(\domain_S^{Y}, \domain_T^{Y}) \leq &~\djs(\domain_S^Z, \domain_T^Z) + \djs(\domain_S^Y, \domain_S^{\hat{Y}}) + \djs(\domain_T^Y, \domain_T^{\hat{Y}}).
\label{equ:chain}
\end{align}
Intuitively, $\djs(\domain_S^Y, \domain_S^{\hat{Y}})$ and $\djs(\domain_T^Y, \domain_T^{\hat{Y}})$ measure the distance between the predicted label distribution and the ground truth label distribution on the source and target domain, respectively. With the help of Lemma~\ref{lemma:lin}, the following result establishes a relationship between $\djs(\domain^Y, \domain^{\hat{Y}})$ and the accuracy of the prediction function $h$:
\begin{restatable}{lemma}{relationship}
Let $Y = f(X)\in\{0, 1\}$ where $f(\cdot)$ is the labeling function and $\hat{Y} = h(g(X))\in\{0, 1\}$ be the prediction function, then $\djs(\domain^Y, \domain^{\hat{Y}})\leq \sqrt{\eps(h\circ g)}$.
\label{lemma:relationship}
\end{restatable}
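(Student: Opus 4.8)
The plan is to reduce the claim to an elementary Bernoulli computation. Since $Y = f(X)\in\{0,1\}$ and $\hat{Y} = h(g(X))\in\{0,1\}$, the pushforward distributions $\domain^Y$ and $\domain^{\hat{Y}}$ over $\ydata$ are Bernoulli, with parameters $p\defeq\Pr(Y=1)$ and $q\defeq\Pr(\hat{Y}=1)$. Because $\djs(\domain^Y,\domain^{\hat{Y}})=\sqrt{\jsd(\domain^Y~||~\domain^{\hat{Y}})}$, it suffices to prove the squared version, $\jsd(\domain^Y~||~\domain^{\hat{Y}})\le\eps(h\circ g)$, and then take square roots.

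First I would apply Lin's lemma (Lemma~\ref{lemma:lin}) to bound the JS divergence of the two label distributions by their $L_1$ distance: $\jsd(\domain^Y~||~\domain^{\hat{Y}})\le\frac{1}{2}\|\domain^Y-\domain^{\hat{Y}}\|_1$. For two Bernoulli distributions this is explicit, $\frac{1}{2}\|\domain^Y-\domain^{\hat{Y}}\|_1 = \frac{1}{2}\big(|p-q| + |(1-p)-(1-q)|\big) = |p-q|$, so the task becomes showing $|p-q|\le\eps(h\circ g)$. For this second step, since $Y$ and $\hat{Y}$ take values in $\{0,1\}$ we have $p=\Exp[Y]$ and $q=\Exp[\hat{Y}]$, hence $|p-q| = |\Exp[Y-\hat{Y}]| \le \Exp[|Y-\hat{Y}|] = \Exp[\ind(Y\neq\hat{Y})] = \Pr(Y\neq\hat{Y})$, which in the deterministic setting is exactly $\eps(h\circ g) = \Exp_{\xx\sim\domain}[|h(g(\xx))-f(\xx)|]$. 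Chaining these inequalities gives $\jsd(\domain^Y~||~\domain^{\hat{Y}})\le\eps(h\circ g)$, and $\djs(\domain^Y,\domain^{\hat{Y}})=\sqrt{\jsd(\domain^Y~||~\domain^{\hat{Y}})}\le\sqrt{\eps(h\circ g)}$ follows.

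I do not expect a real obstacle here; the argument is short and essentially routine. The only points requiring care are invoking the correct constant in Lin's lemma (equivalently, recalling that $\frac{1}{2}\|\cdot\|_1$ is the total variation distance) and the elementary fact that, after that factor of $\frac{1}{2}$, the $L_1$ distance between $\mathrm{Bern}(p)$ and $\mathrm{Bern}(q)$ collapses to $|p-q|$ rather than $2|p-q|$. If Lemma~\ref{lemma:lin} is instead phrased directly in terms of total variation, the first step is immediate and the remainder is unchanged.
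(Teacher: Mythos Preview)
Your proposal is correct and follows essentially the same route as the paper: apply Lin's lemma to bound $\jsd$ by $\tfrac{1}{2}\|\domain^Y-\domain^{\hat{Y}}\|_1$, collapse this to $|\Pr(Y=1)-\Pr(\hat Y=1)|$ for the two Bernoulli marginals, and then bound that difference of expectations by $\Exp[|f(X)-h(g(X))|]=\eps(h\circ g)$ before taking the square root. The only cosmetic difference is that you first prove the squared inequality and then take roots, whereas the paper carries the square root through the entire chain.
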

We are now ready to present the key lemma of the section:
\begin{restatable}{lemma}{lowerbound}
Suppose the Markov chain $X \overset{g}{\longrightarrow} Z \overset{h}{\longrightarrow} \hat{Y}$ holds, then
\begin{align*}
\djs(\domain_S^{Y}, \domain_T^{Y}) \leq\djs(\domain_S^Z, \domain_T^Z) + \sqrt{\eps_S(h\circ g)} + \sqrt{\eps_T(h\circ g)}.
\end{align*}
\label{thm:lower}
\end{restatable}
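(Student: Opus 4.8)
The plan is to assemble the claim from pieces that are already in place: inequality~\eqref{equ:chain} and Lemma~\ref{lemma:relationship}. Recall that~\eqref{equ:chain} was derived by first applying the triangle inequality to the JS \emph{distance} $\djs$ (which is a bona fide metric, per~\citet{endres2003new}) to split $\djs(\domain_S^Y,\domain_T^Y)$ along the intermediate points $\domain_S^{\hat Y}$ and $\domain_T^{\hat Y}$, and then using the data-processing inequality of Lemma~\ref{lemma:jsd} on the middle term, exploiting the Markov chain $X\overset{g}{\to} Z\overset{h}{\to}\hat Y$ to replace $\djs(\domain_S^{\hat Y},\domain_T^{\hat Y})$ by the larger quantity $\djs(\domain_S^Z,\domain_T^Z)$. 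So the only thing left to do is control the two remaining terms $\djs(\domain_S^Y,\domain_S^{\hat Y})$ and $\djs(\domain_T^Y,\domain_T^{\hat Y})$.

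For that I would invoke Lemma~\ref{lemma:relationship} twice, once per domain: taking $f=f_S$ and the induced label/prediction variables under $\domain_S$ gives $\djs(\domain_S^Y,\domain_S^{\hat Y})\le\sqrt{\eps_S(h\circ g)}$, and similarly under $\domain_T$ gives $\djs(\domain_T^Y,\domain_T^{\hat Y})\le\sqrt{\eps_T(h\circ g)}$. Here it is worth a line to check the hypotheses of Lemma~\ref{lemma:relationship} are met: we are in the deterministic binary setting, so $Y=f(X)\in\{0,1\}$ and $\hat Y=h(g(X))\in\{0,1\}$, exactly as the lemma requires. Substituting these two bounds into~\eqref{equ:chain} yields
\[
\djs(\domain_S^Y,\domain_T^Y)\le \djs(\domain_S^Z,\domain_T^Z)+\sqrt{\eps_S(h\circ g)}+\sqrt{\eps_T(h\circ g)},
\]
which is the claim.

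In short, this lemma is a direct corollary of~\eqref{equ:chain} and Lemma~\ref{lemma:relationship}, so there is no real obstacle at this level of the argument; all the genuine work has been pushed into the supporting lemmas (the data-processing inequality behind Lemma~\ref{lemma:jsd}, and the reduction of $\djs(\domain^Y,\domain^{\hat Y})$ to the $0/1$ error via Lin's lemma behind Lemma~\ref{lemma:relationship}). If anything, the only point demanding a moment's care is making sure the triangle inequality and the DPI are applied to the \emph{distance} $\djs=\sqrt{\jsd}$ and not to $\jsd$ itself — the square root is what makes the triangle inequality available, and it is also what produces the $\sqrt{\cdot}$ on the error terms in the final bound.
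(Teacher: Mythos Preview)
Your proposal is correct and follows exactly the same approach as the paper's own proof: triangle inequality for $\djs$, data-processing via Lemma~\ref{lemma:jsd} to obtain~\eqref{equ:chain}, then Lemma~\ref{lemma:relationship} applied once per domain. The paper's write-up simply re-derives~\eqref{equ:chain} inline rather than citing it, but the argument is identical.
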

\textbf{Remark}~~This lemma shows that if the marginal label distributions are significantly different between the source and target domains, then in order to achieve a small joint error, the induced distributions over $\hdata$ from source and target domains have to be significantly different as well. Put another way, if we are able to find an invariant representation such that $\djs(\domain_S^Z, \domain_T^Z) = 0$, then the joint error of the composition function $h\circ g$ has to be large: 
\begin{restatable}{theorem}{ll}
\label{thm:lowerbound}
Suppose the condition in Lemma~\ref{thm:lower} holds and $\djs(\domain_S^Y, \domain_T^Y)\geq \djs(\domain_S^Z, \domain_T^Z)$, then:
\begin{equation*}
    \err_S(h\circ g) + \err_T(h\circ g) \geq \frac{1}{2}\left(\djs(\domain_S^Y, \domain_T^Y) - \djs(\domain_S^Z, \domain_T^Z)\right)^2.
\end{equation*}
\end{restatable}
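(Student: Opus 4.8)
The plan is to chain together Lemma~\ref{thm:lower} with an elementary inequality relating $\sqrt{a}+\sqrt{b}$ to $a+b$. Lemma~\ref{thm:lower} already gives
\begin{equation*}
\sqrt{\eps_S(h\circ g)} + \sqrt{\eps_T(h\circ g)} \geq \djs(\domain_S^Y, \domain_T^Y) - \djs(\domain_S^Z, \domain_T^Z),
\end{equation*}
and the extra hypothesis $\djs(\domain_S^Y, \domain_T^Y)\geq \djs(\domain_S^Z, \domain_T^Z)$ guarantees the right-hand side is nonnegative, so it is legitimate to square both sides later.

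Next I would invoke the standard bound $(\sqrt{a}+\sqrt{b})^2 = a + b + 2\sqrt{ab} \leq 2(a+b)$ for $a,b\geq 0$ (a one-line consequence of AM--GM, or equivalently of Cauchy--Schwarz applied to $(1,1)$ and $(\sqrt a,\sqrt b)$), which rearranges to
\begin{equation*}
a + b \geq \frac{1}{2}\left(\sqrt{a}+\sqrt{b}\right)^2.
\end{equation*}
Applying this with $a = \eps_S(h\circ g)$ and $b = \eps_T(h\circ g)$, then substituting the lower bound on $\sqrt{\eps_S(h\circ g)}+\sqrt{\eps_T(h\circ g)}$ from the previous step (using monotonicity of $t\mapsto t^2$ on the nonnegative reals, which is where the squaring hypothesis is needed), yields exactly
\begin{equation*}
\err_S(h\circ g) + \err_T(h\circ g) \geq \frac{1}{2}\left(\djs(\domain_S^Y, \domain_T^Y) - \djs(\domain_S^Z, \domain_T^Z)\right)^2.
\end{equation*}

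There is essentially no hard step here: the work has already been done in Lemma~\ref{thm:lower} (which in turn rests on the data-processing inequality for the JS distance via Lemma~\ref{lemma:jsd}, the triangle inequality for the JS metric, and Lemma~\ref{lemma:relationship}). The only point requiring a moment of care is making sure both sides are nonnegative before squaring the inequality $\sqrt{\eps_S}+\sqrt{\eps_T}\geq \djs(\domain_S^Y,\domain_T^Y)-\djs(\domain_S^Z,\domain_T^Z)$ — which is precisely what the stated condition $\djs(\domain_S^Y, \domain_T^Y)\geq \djs(\domain_S^Z, \domain_T^Z)$ is there to provide. I would also remark, after the proof, that the condition is mild: in the regime of interest the label distributions differ substantially while the representations are (near-)invariant, so $\djs(\domain_S^Z,\domain_T^Z)$ is small and the hypothesis holds automatically.
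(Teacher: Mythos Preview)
Your proposal is correct and matches the paper's own proof essentially step for step: the paper also invokes Lemma~\ref{thm:lower}, applies the AM--GM inequality in the form $\sqrt{a}+\sqrt{b}\leq\sqrt{2(a+b)}$, and then uses the hypothesis $\djs(\domain_S^Y, \domain_T^Y)\geq \djs(\domain_S^Z, \domain_T^Z)$ to justify squaring. Your write-up is in fact slightly more explicit than the paper's about \emph{why} the nonnegativity hypothesis is needed before squaring, which is a nice touch.
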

\textbf{Remark}~~The lower bound gives us a necessary condition on the success of any domain adaptation approach based on learning invariant representations: if the marginal label distributions are significantly different between source and target domains, then minimizing $\djs(\domain_S^Z, \domain_T^Z)$ and the source error $\err_S(h\circ g)$ will only increase the target error. In fact, Theorem~\ref{thm:lowerbound} can be extended to hold in the setting where different transformation functions are applied in source and target domains:
\begin{corollary}
    \label{coro:different}
    Let $g_S$, $g_T$ be the source and target transformation functions from $\data$ to $\hdata$. Suppose the condition in Lemma~\ref{thm:lower} holds and $\djs(\domain_S^Y, \domain_T^Y)\geq \djs(\domain_S^Z, \domain_T^Z)$, then:
    \begin{equation*}
        \err_S(h\circ g_S) + \err_T(h\circ g_T) \geq \frac{1}{2}\left(\djs(\domain_S^Y, \domain_T^Y) - \djs(\domain_S^Z, \domain_T^Z)\right)^2.
    \end{equation*}
\end{corollary}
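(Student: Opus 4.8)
The plan is to re-run the argument behind Lemma~\ref{thm:lower} and Theorem~\ref{thm:lowerbound} almost verbatim, the only change being that the accuracy terms are bookkept per domain. Concretely, with the two transformations we now have two Markov chains, $X \overset{g_S}{\longrightarrow} Z_S \overset{h}{\longrightarrow} \hat{Y}_S$ and $X \overset{g_T}{\longrightarrow} Z_T \overset{h}{\longrightarrow} \hat{Y}_T$, in which the \emph{same} hypothesis $h$ is applied downstream. Let $\domain_S^Z$ be the pushforward of $\domain_S$ under $g_S$ and $\domain_T^Z$ the pushforward of $\domain_T$ under $g_T$ (both distributions over $\hdata$), and let $\domain_S^{\hat{Y}}, \domain_T^{\hat{Y}}$ be their further pushforwards under $h$.

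The key observation is that Lemma~\ref{lemma:jsd} (the data-processing inequality for the JS distance) is insensitive to \emph{how} the two distributions on $\hdata$ were produced: it only requires that a common, possibly stochastic, map be applied to both of them. Since that map is $h$ in both chains, we still get $\djs(\domain_S^{\hat{Y}}, \domain_T^{\hat{Y}}) \leq \djs(\domain_S^Z, \domain_T^Z)$. Then, exactly as in the derivation of inequality~\eqref{equ:chain}, the triangle inequality for the JS distance metric gives
\[
  \djs(\domain_S^Y, \domain_T^Y) \leq \djs(\domain_S^Y, \domain_S^{\hat{Y}}) + \djs(\domain_S^{\hat{Y}}, \domain_T^{\hat{Y}}) + \djs(\domain_T^{\hat{Y}}, \domain_T^Y),
\]
and applying Lemma~\ref{lemma:relationship} on the source domain with feature map $g_S$ and on the target domain with feature map $g_T$ bounds the two outer terms by $\sqrt{\eps_S(h\circ g_S)}$ and $\sqrt{\eps_T(h\circ g_T)}$ respectively. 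Combining these, the two-map analogue of Lemma~\ref{thm:lower} holds:
\[
  \djs(\domain_S^Y, \domain_T^Y) \leq \djs(\domain_S^Z, \domain_T^Z) + \sqrt{\eps_S(h\circ g_S)} + \sqrt{\eps_T(h\circ g_T)}.
\]

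From here I would finish exactly as in the proof of Theorem~\ref{thm:lowerbound}: rearrange to $\sqrt{\eps_S(h\circ g_S)} + \sqrt{\eps_T(h\circ g_T)} \geq \djs(\domain_S^Y, \domain_T^Y) - \djs(\domain_S^Z, \domain_T^Z)$, which is nonnegative by the standing hypothesis $\djs(\domain_S^Y, \domain_T^Y) \geq \djs(\domain_S^Z, \domain_T^Z)$; square both sides, and use $2\sqrt{ab} \leq a+b$ to absorb the cross term, yielding $2\bigl(\eps_S(h\circ g_S) + \eps_T(h\circ g_T)\bigr) \geq \bigl(\djs(\domain_S^Y, \domain_T^Y) - \djs(\domain_S^Z, \domain_T^Z)\bigr)^2$, which is the claim.

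I do not expect a genuine obstacle here — there is no hard computation. The one point requiring care is the bookkeeping: that $\domain_S^Z$ and $\domain_T^Z$ in the statement must be read as the pushforwards under $g_S$ and $g_T$ separately, that $\eps_S$ is measured through $g_S$ and $\eps_T$ through $g_T$, and — the conceptual heart of the extension — that the data-processing step only needs the \emph{shared} classifier $h$, not a shared feature map. Once that is pinned down, the remainder is identical to the single-map case.
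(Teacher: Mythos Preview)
Your proposal is correct and matches the paper's intended argument. The paper does not give a separate proof of Corollary~\ref{coro:different}; it simply remarks that Theorem~\ref{thm:lowerbound} ``can be extended'' to domain-specific transformations, and your write-up supplies exactly the missing bookkeeping: the data-processing step (Lemma~\ref{lemma:jsd}) only needs the shared classifier $h$ on $\hdata$, Lemma~\ref{lemma:relationship} is applied per domain with the appropriate $g_S$ or $g_T$, and the final AM--GM squaring is unchanged.
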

Recent work has also explored using different transformation functions to achieve invariant representations~\citep{bousmalis2016domain,tzeng2017adversarial}, but Corollary~\ref{coro:different} shows that this is not going to help if the marginal label distributions differ between two domains. 

We conclude this section by noting that our bound on the joint error of both domains is not necessarily the tightest one. This can be seen from the example in Sec.~\ref{sec:example}, where $\djs(\domain_S^Z, \domain_T^Z) = \djs(\domain_S^Y, \domain_T^Y) = 0$, and we have $\err_S(h\circ g) + \err_T(h\circ g) = 1$, but in this case our result gives a trivial lower bound of 0. Nevertheless, our result still sheds new light on the importance of matching marginal label distributions in learning invariant representation for domain adaptation, which we believe to be a promising direction for the design of better adaptation algorithms.

\section{Experiments}
\label{sec:exp}
Our theoretical results on the lower bound of the joint error imply that over-training the feature transformation function and the discriminator may hurt generalization on the target domain. In this section, we conduct experiments on real-world datasets to verify our theoretical findings. The task is digit classification on three datasets of 10 classes: MNIST, USPS and SVHN. MNIST contains 60,000/10,000 train/test instances; USPS contains 7,291/2,007 train/test instances, and SVHN contains 73,257/26,032 train/test instances. We show the label distribution of these three datasets in Fig.~\ref{fig:label}.
\begin{figure}[htb]
    \centering
    \includegraphics[width=0.8\linewidth]{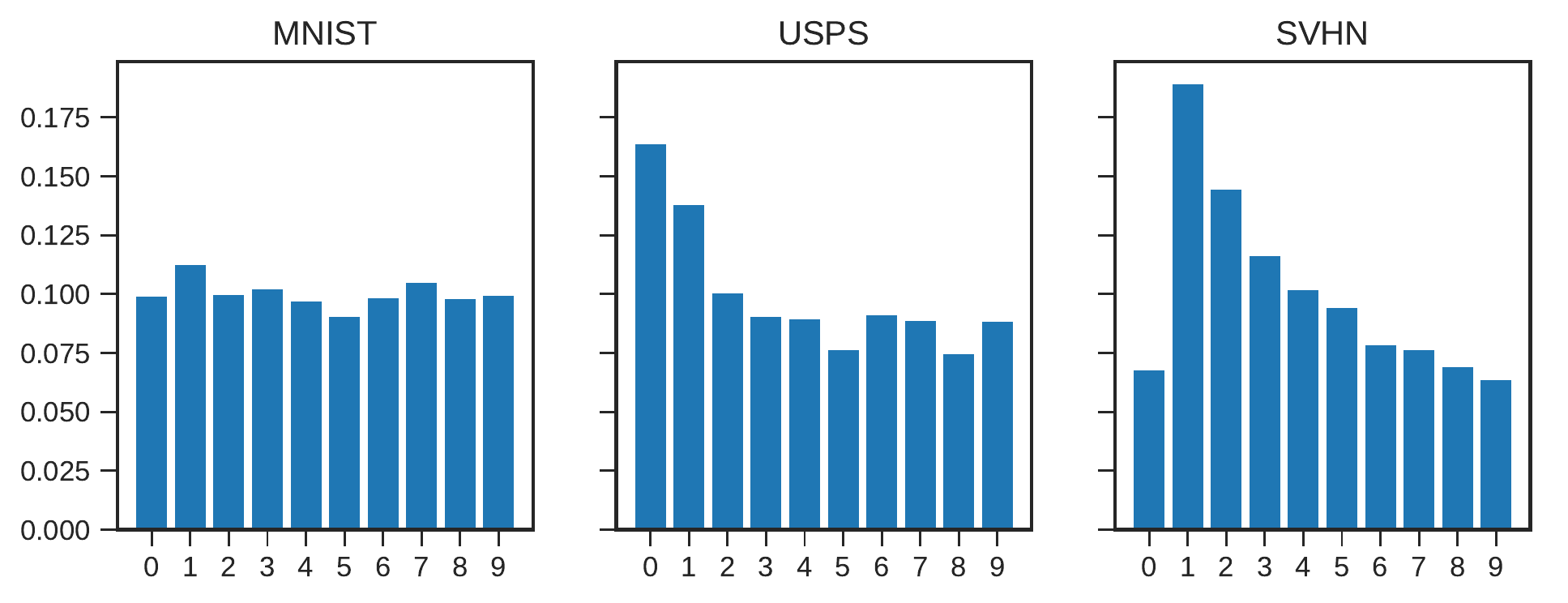}
    \caption{The label distributions of MNIST, USPS and SVHN.}
    \label{fig:label}
\end{figure}

Before training, we preprocess all the samples into gray scale single-channel images of size $16\times 16$, so they can be used by the same network. In our experiments, to ensure a fair comparison, we use the same network structure for all the experiments: 2 convolutional layers, one fully connected hidden layer, followed by a softmax output layer with 10 units. The convolution kernels in both layers are of size $5\times 5$, with 10 and 20 channels, respectively. The hidden layer has 1280 units connected to 100 units before classification. For domain adaptation, we use the original DANN~\citep{ganin2016domain} with gradient reversal implementation. The discriminator in DANN takes the output of convolutional layers as its feature input, followed by a $500\times 100$ fully connected layer, and a one-unit binary classification output. 
\begin{figure*}[htb]
    \centering
    \begin{subfigure}[b]{0.23\linewidth}
        \includegraphics[width=\linewidth]{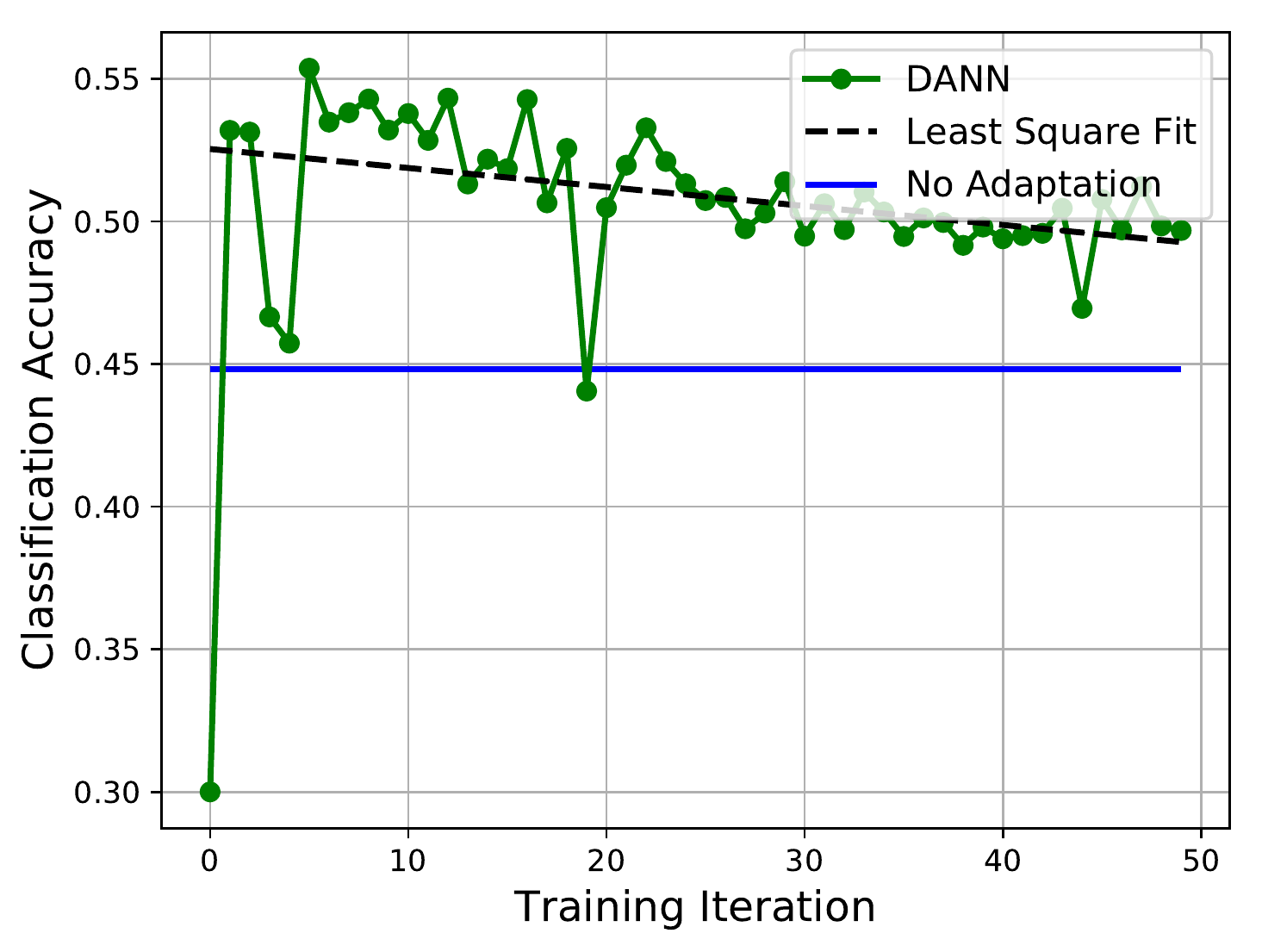}
        \caption{USPS $\to$ MNIST}
    \end{subfigure}
    ~
    \begin{subfigure}[b]{0.23\linewidth}
        \includegraphics[width=\linewidth]{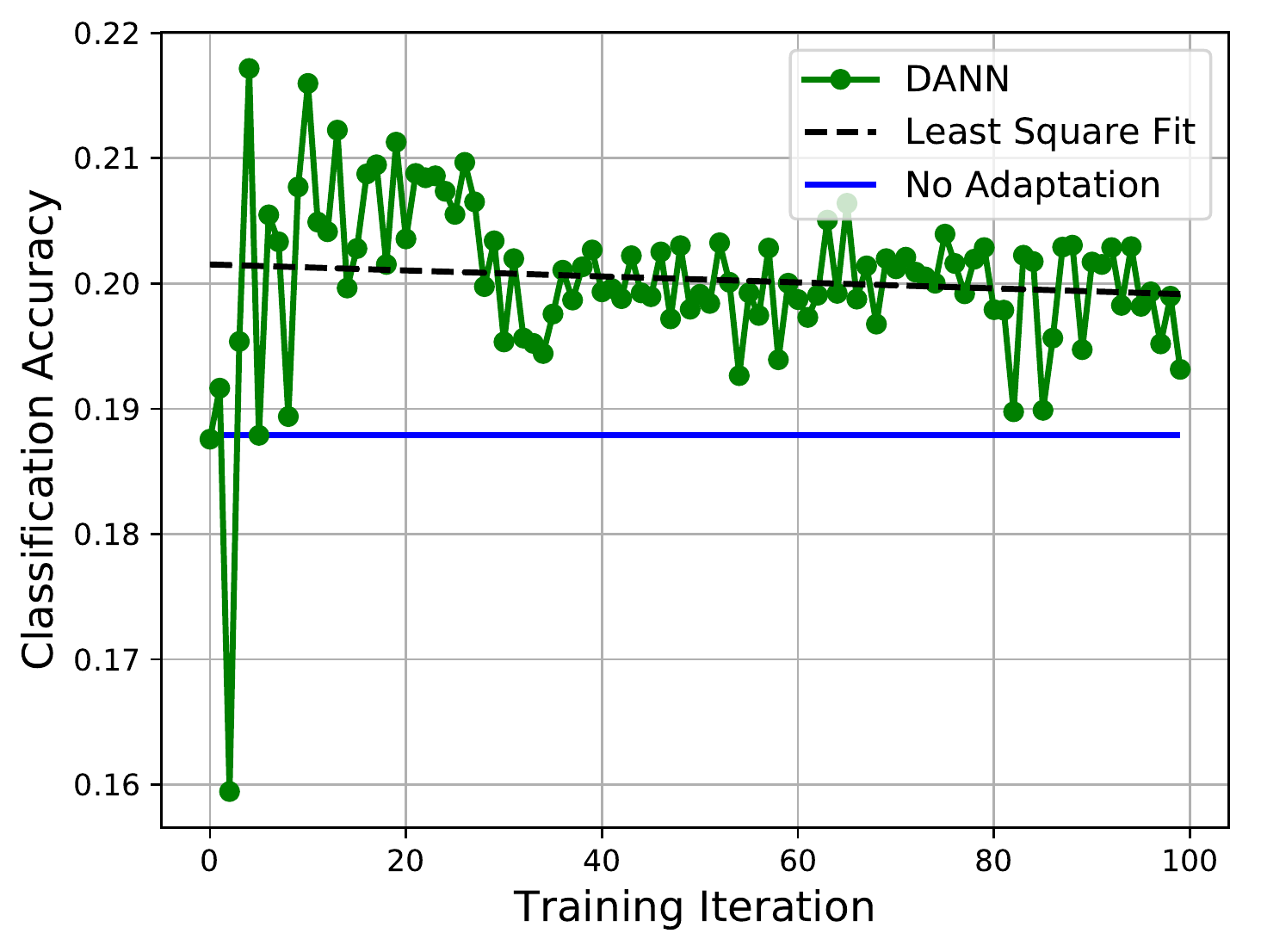}
        \caption{USPS $\to$ SVHN}
    \end{subfigure}
    ~
    \begin{subfigure}[b]{0.23\linewidth}
        \includegraphics[width=\linewidth]{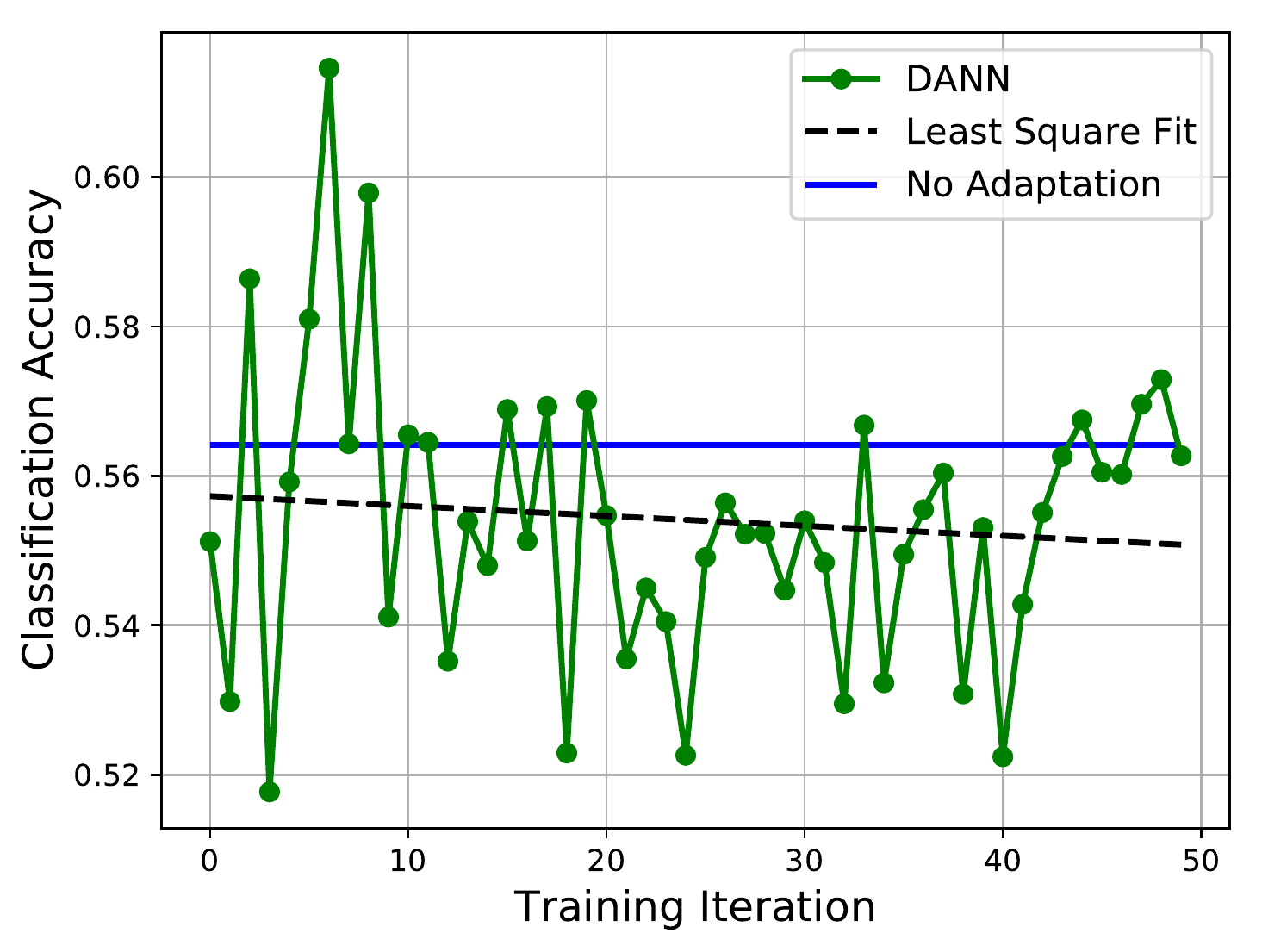}
        \caption{SVHN $\to$ MNIST}
    \end{subfigure}
    ~
    \begin{subfigure}[b]{0.23\linewidth}
        \includegraphics[width=\linewidth]{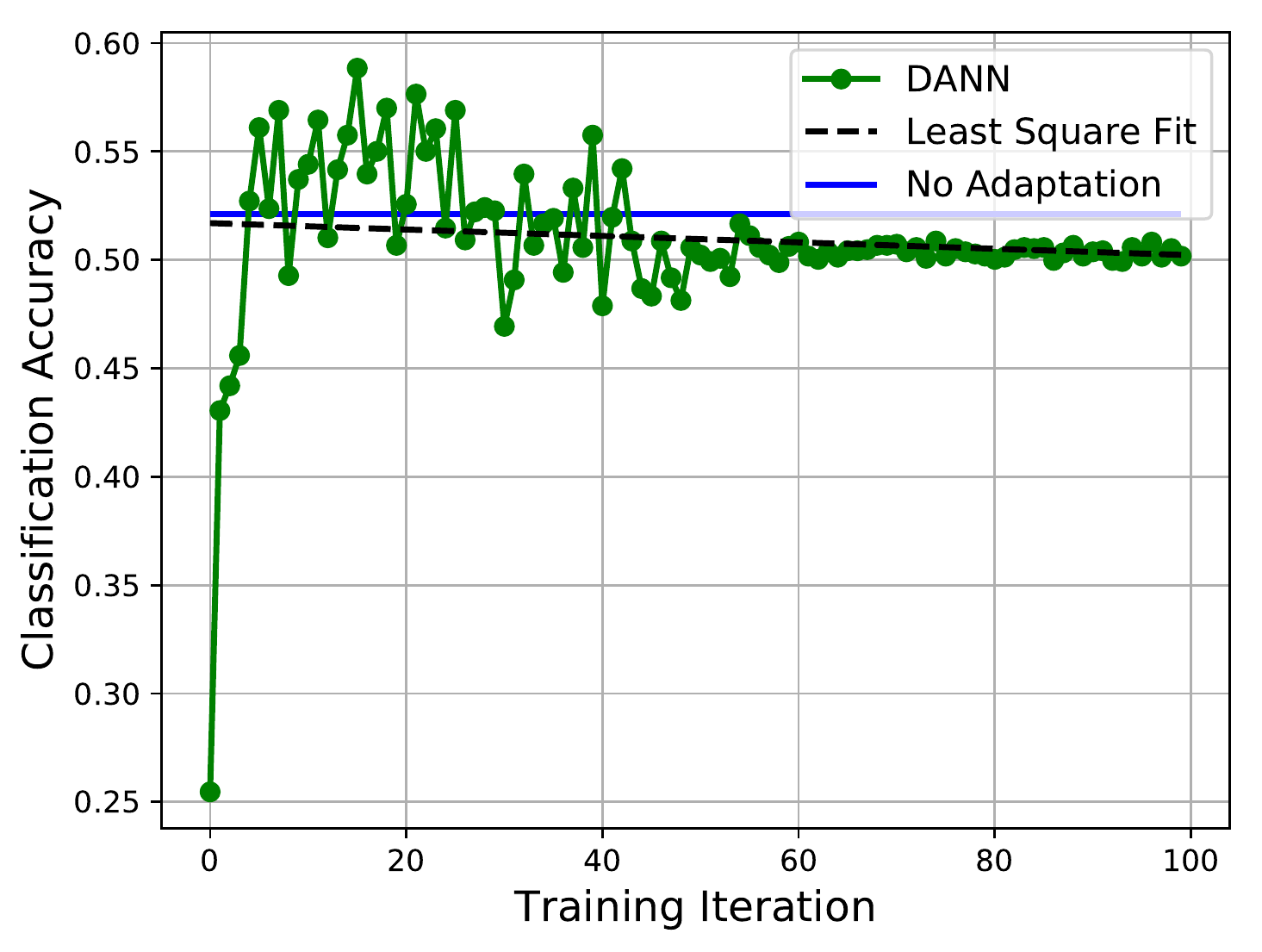}
        \caption{SVHN $\to$ USPS}
    \end{subfigure}
\caption{Digit classification on MNIST, USPS and SVHN. The horizontal solid line corresponds to the target domain test accuracy without adaptation. The green solid line is the target domain test accuracy under domain adaptation with DANN. We also plot the least square fit (dashed line) of the DANN adaptation results to emphasize the negative slope.}
\label{fig:dann}
\end{figure*}

We plot four adaptation trajectories in Fig.~\ref{fig:dann}. Among the four adaptation tasks, we can observe two phases in the adaptation accuracy. In the first phase, the test set accuracy rapidly grows, in less than 10 iterations. In the second phase, it gradually decreases after reaching its peak, despite the fact that the source training accuracy keeps increasing smoothly. Those phase transitions can be verified from the negative slopes of the least squares fit of the adaptation curves (dashed lines in Fig.~\ref{fig:dann}). We observe similar phenomenons on additional experiments using artificially unbalanced datasets trained on more powerful networks in Appendix~\ref{sec:additional}. The above experimental results imply that over-training the feature transformation and discriminator does not help generalization on the target domain, but can instead hurt it when the label distributions differ (as shown in Fig.~\ref{fig:label}). These experimental results are consistent with our theoretical findings. 

\section{Conclusion and Future Work}
\label{sec:conclusion}
In this paper we theoretically and empirically study the important problem of learning invariant representations for domain adaptation. We show that learning an invariant representation and achieving a small source error is not enough to guarantee target generalization. We then prove both upper and lower bounds for the target and joint errors, which directly translate to sufficient and necessary conditions for the success of adaptation. We believe our results take an important step towards understanding deep domain adaptation, and also stimulate future work on the design of stronger deep domain adaptation algorithms that align conditional distributions. Another interesting direction for future work is to characterize what properties the feature transformation function should have in order to decrease the conditional shift. It is also worth investigating under which conditions the label distributions can be aligned without explicit labeled data from the target domain.

\newpage
\bibliography{reference}
\bibliographystyle{plainnat}

\appendix
\onecolumn
\section{Missing Proofs}
\technical*
\begin{proof}
By definition, for $\forall h, h'\in \HH$, we have:
\begin{align}
|\err_S(h, h') - \err_T(h, h')| &\leq \sup_{h, h'\in\HH} |\err_S(h, h') - \err_T(h, h')| \nonumber\\
&= \sup_{h, h'\in \HH} \big| \Exp_{\xx\sim S}[|h(\xx) - h'(\xx)|] - \Exp_{\xx\sim T}[|h(\xx) - h'(\xx)|]\big| 
\label{equ:k}
\end{align}
Since $||h||_\infty \leq 1, \forall h\in \HH$, then $0\leq |h(\xx) - h'(\xx)| \leq 1$, $\forall \xx\in\data, h, h'\in\HH$. We now use Fubini's theorem to bound $\big| \Exp_{\xx\sim S}[|h(\xx) - h'(\xx)|] - \Exp_{\xx\sim T}[|h(\xx) - h'(\xx)|]\big|$:
\begin{align*}
\big| \Exp_{\xx\sim S}[|h(\xx) - h'(\xx)|] - & \Exp_{\xx\sim T}[|h(\xx) - h'(\xx)|]\big|  \\
 &= \Big| \int_0^1 \left(\Pr_S(|h(\xx) - h'(\xx)| > t) - \Pr_T(|h(\xx) - h'(\xx)| > t)\right)~dt   \Big| \\
 &\leq  \int_0^1 \Big|\Pr_S(|h(\xx) - h'(\xx)| > t) - \Pr_T(|h(\xx) - h'(\xx)| > t)\Big|~dt   \\
 &\leq \sup_{t\in[0, 1]}\Big|\Pr_S(|h(\xx) - h'(\xx)| > t) - \Pr_T(|h(\xx) - h'(\xx)| > t)\Big|
\end{align*}
Now in view of \eqref{equ:k} and the definition of $\tilde{\HH}$, we have:
\begin{align*}
&\sup_{h, h'\in\HH}\sup_{t\in[0, 1]} \Big|\Pr_S(|h(\xx) - h'(\xx)| > t) - \Pr_T(|h(\xx) - h'(\xx)| > t)\Big| \\
= & ~ \sup_{\tilde{h}\in\tilde{\HH}}|\Pr_S(\tilde{h}(\xx) = 1) - \Pr_T(\tilde{h}(\xx) = 1)| \\
= & ~ \sup_{A\in\mathcal{A}_{\tilde{\HH}}}|\Pr_S(A) - \Pr_T(A)| \\
= & ~ d_{\tilde{\HH}}(\domain_S, \domain_T)
\end{align*}
Combining all the inequalities above finishes the proof. 
\end{proof}

\tri*
\begin{proof}
\begin{align*}
    \err_\domain(h, h') &= \Exp_{\xx\sim\domain}[|h(\xx) - h'(\xx)|] = \Exp_{\xx\sim\domain}[|h(\xx) - h''(\xx) + h''(\xx) - h'(\xx)|] \\
    & \leq \Exp_{\xx\sim\domain}[|h(\xx) - h''(\xx)| + |h''(\xx) - h'(\xx)|] = \err_\domain(h, h'') + \err_\domain(h'', h')
\end{align*}
\end{proof}

\population*
\begin{proof}
On one hand, with Lemma~\ref{lemma:key} and Lemma~\ref{lemma:tri}, we have $\forall h\in\HH$:
\begin{equation*}
    \err_T(h) = \err_{T}(h, f_T) \leq \err_{S}(h, f_T) + d_{\tilde{H}}(\domain_S, \domain_T)  \leq \err_{S}(h) + \err_{S}(f_S, f_T) + d_{\tilde{H}}(\domain_S, \domain_T).
\end{equation*}
On the other hand, by changing the order of two triangle inequalities, we also have:
\begin{equation*}
    \err_T(h) = \err_{T}(h, f_T) \leq \err_{T}(h, f_S) + \err_{T}(f_S, f_T) \leq \err_S(h) + \err_{T}(f_S, f_T) + d_{\tilde{H}}(\domain_S, \domain_T).
\end{equation*}
Realize that by definition $\err_S(f_S, f_T) = \Exp_{\domain_S}[|f_S - f_T|]$ and $\err_T(f_S, f_T) = \Exp_{\domain_T}[|f_S - f_T|]$. Combining the above two inequalities completes the proof.
\end{proof}

\source*
\begin{proof}
Consider the source domain $\domain_S$. For $\forall h\in\HH$, define the loss function $\ell: \data\to[0,1]$ as $\ell(\xx)\defeq |h(\xx) - f_S(\xx)|$. First, we know that $\emrad_\sample(\HH - f_S) = \emrad_\sample(\HH)$ where we slightly abuse the notation $\HH - f_S$ to mean the family of functions $\{h - f_S\mid \forall h\in\HH\}$:
\begin{align*}
    \emrad_\sample(\HH - f_S) &= \Exp_{\sigmas}\bigg[\sup_{h'\in\HH - f_S}\frac{1}{n}\sum_{i=1}^n \sigma_i h'(\xx_i)\bigg] = \Exp_{\sigmas}\bigg[\sup_{h\in\HH}\frac{1}{n}\sum_{i=1}^n \sigma_i (h(\xx_i) - f_S(\xx_i))\bigg] \\
    &= \Exp_{\sigmas}\bigg[\sup_{h\in\HH}\frac{1}{n}\sum_{i=1}^n \sigma_i h(\xx_i)\bigg] + \Exp_{\sigmas}\bigg[\frac{1}{n}\sum_{i=1}^n \sigma_i f_S(\xx_i)\bigg] \\
    &= \emrad_\sample(\HH)
\end{align*}
Observe that the function $\phi: t\to |t|$ is 1-Lipschitz continuous, then by Ledoux-Talagrand’s contraction lemma, we can conclude that 
\begin{equation*}
    \emrad_\sample(\phi \circ (\HH - f_S))\leq \emrad_\sample(\HH - f_S) = \emrad_\sample(\HH)
\end{equation*}
Using Lemma~\ref{lemma:emrad} with the above arguments and realize that $\err_S(h) = \Exp_{\xx\sim\domain_S}[|h(\xx) - f_S(\xx)|]$ finishes the proof.
\end{proof}

\ddd*
\begin{proof}
Note that $\ind_h\in\{0, 1\}$, hence this lemma directly follows Lemma~\ref{lemma:emrad}.
\end{proof}

\hdiv*
\begin{proof}
By the triangular inequality of $d_{\tilde{\HH}}(\cdot, \cdot)$, we have:
\begin{equation*}
    d_{\tilde{\HH}}(\domain, \domain') \leq d_{\tilde{\HH}}(\domain, \empdomain) + d_{\tilde{\HH}}(\empdomain, \empdomain') + d_{\tilde{\HH}}(\empdomain', \domain').
\end{equation*}
Now with Lemma~\ref{lemma:ind}, we know that with probability $\geq 1-\delta / 2$, we have:
\begin{equation*}
    d_{\tilde{\HH}}(\domain, \empdomain) \leq 2\emrad_\sample({\tilde{\HH}}) + 3\sqrt{\log(4/\delta)/2n}.
\end{equation*}
Similarly, with probability $\geq 1-\delta/2$, the following inequality also holds:
\begin{equation*}
    d_{\tilde{\HH}}(\domain', \empdomain') \leq 2\emrad_\sample({\tilde{\HH}}) + 3\sqrt{\log(4/\delta)/2n}.
\end{equation*}
A union bound to combine the above two inequalities then finishes the proof. 
\end{proof}

\main*
\begin{proof}
By Theorem~\ref{thm:populationbound}, the following inequality holds:
\begin{equation*}
    \err_T(h) \leq \err_S(h) + d_{\tilde{\HH}}(\domain_S, \domain_T) + \min\{\Exp_{\domain_S}[|f_S - f_T|], \Exp_{\domain_T}[|f_S - f_T|]\}.
\end{equation*}
To get probabilistic bounds for both $\err_S(h)$ and $d_{\tilde{\HH}}(\domain_S, \domain_T)$, we apply Lemma~\ref{lemma:source} and Lemma~\ref{lemma:hdiv}, respectively. The final step, again, is to use a union bound to combine all the inequalities above, which completes the proof.
\end{proof}

\dpi*
\begin{proof}
Let $B$ be a uniform random variable taking value in $\{0,1\}$ and let the random variable $Y_B$ with distribution $\domain_B^Y$ (resp. $Z_B$ with distribution $\domain_B^Z$) be the mixture of $\domain_S^Y$ and $\domain_T^Y$ (resp. $\domain_S^Z$ and $\domain_T^Z$) according to $B$. We know that: 
\begin{equation}
    \jsd(\domain_S^Z~||~\domain_T^Z) = I(B; Z_B),\quad \text{and} \quad \jsd(\domain_S^Y~||~\domain_T^Y) = I(B; Y_B).
\end{equation}
Since $\domain_S^Y$ (resp. $\domain_T^Y$) is induced by the function $h:\hdata\mapsto\ydata$ from $\domain_S^Z$ (resp. $\domain_T^Z$), by linearity, we also have $\domain_B^Y$ is induced by $h$ from $\domain_B^Z$. Hence $Y_B = h(Z_B)$ and the following Markov chain holds:
\begin{equation*}
    B\rightarrow Z_B\rightarrow Y_B.
\end{equation*}
Apply the data processing inequality (Lemma~\ref{lemma:dpi}), we have
\begin{equation*}
    \jsd(\domain_S^Z~||~\domain_T^Z) = I(B;Z_B)\geq I(B; Y_B) = \jsd(\domain_S^Y~||~\domain_T^Y).
\end{equation*}
Taking square root on both sides of the above inequality completes the proof.
\end{proof}

\relationship*
\begin{proof}
\begin{align*}
    \djs(\domain^Y, \domain^{\hat{Y}}) &= \sqrt{\jsd(\domain^Y, \domain^{\hat{Y}})} \\
    & \leq \sqrt{||\domain^Y - \domain^{\hat{Y}}||_1/2} && (\text{Lemma~\ref{lemma:lin}})\\
    &= \sqrt{\left(|\Pr(Y = 0) - \Pr(\hat{Y} = 0)| + |\Pr(Y = 1) - \Pr(\hat{Y} = 1)|\right)/2}\\
    &= \sqrt{|\Pr(Y = 1) - \Pr(\hat{Y} = 1)|} \\
    &= \sqrt{|\Exp_X[f(X)] - \Exp_X[h(g(X))]|} \\
    &\leq \sqrt{\Exp_X[|f(X) - h(g(X))|]} \\
    &= \sqrt{\err(h\circ g)}
\end{align*}
\end{proof}

\lowerbound*
\begin{proof}
Since $X \overset{g}{\longrightarrow} Z \overset{h}{\longrightarrow} \hat{Y}$ forms a Markov chain, by Lemma~\ref{lemma:jsd}, the following inequality holds:
\begin{equation*}
\djs(\domain^{\hat{Y}}_S, \domain^{\hat{Y}}_T) \leq \djs(\domain^Z_S, \domain^Z_T).    
\end{equation*}
On the other hand, since $\djs(\cdot, \cdot)$ is a distance metric, we also have:
\begin{equation*}
    \djs(\domain_S^Y, \domain_T^Y) \leq \djs(\domain_S^Y, \domain_S^{\hat{Y}}) + \djs(\domain_S^{\hat{Y}}, \domain_T^{\hat{Y}}) + \djs(\domain_T^{\hat{Y}}, \domain_T^Y) \leq \djs(\domain_S^Y, \domain_S^{\hat{Y}}) + \djs(\domain^Z_S, \domain^Z_T) + \djs(\domain_T^{\hat{Y}}, \domain_T^Y).
\end{equation*}
Applying Lemma~\ref{lemma:relationship} to both $\djs(\domain_S^Y, \domain_S^{\hat{Y}})$ and $\djs(\domain_T^{\hat{Y}}, \domain_T^Y)$ then finishes the proof.
\end{proof}

\ll*
\begin{proof}
In view of the result in Theorem~\ref{thm:lower}, applying the AM-GM inequality, we have:
\begin{equation*}
\sqrt{\eps_S(h\circ g)} + \sqrt{\eps_T(h\circ g)} \leq \sqrt{2\left(\eps_S(h\circ g) + \eps_T(h\circ g)\right)}.
\end{equation*}
Now since $\djs(\domain_S^Y, \domain_T^Y)\geq \djs(\domain_S^Z, \domain_T^Z)$, simple algebra shows
\begin{equation*}
    \err_S(h\circ g) + \err_T(h\circ g) \geq \frac{1}{2}\left(\djs(\domain_S^Y, \domain_T^Y) - \djs(\domain_S^Z, \domain_T^Z)\right)^2.
\end{equation*}
\end{proof}

\section{Technical Tools}
The following lemma is particularly useful to provide data-dependent guarantees in terms of the empirical Rademacher complexity:
\begin{lemma}[\citet{bartlett2002rademacher}]
Let $\HH\subseteq [0, 1]^\data$, then for $\forall \delta > 0$, w.p.b. at least $1-\delta$, the following inequality holds for $\forall h\in\HH$:
\begin{equation}
    \Exp[h(\xx)] \leq \frac{1}{n}\sum_{i=1}^n h(\xx_i) + 2\emrad_\sample(\HH) + 3\sqrt{\frac{\log(2/\delta)}{2n}}
\end{equation}
\label{lemma:emrad}
\end{lemma}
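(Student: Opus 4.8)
This is the standard one-sided uniform-convergence bound in terms of the empirical Rademacher complexity, and I would prove it by the classical bounded-differences plus symmetrization argument, applied twice. Define the worst-case one-sided deviation
\begin{equation*}
  \Phi(\sample) \defeq \sup_{h\in\HH}\left(\Exp[h(\xx)] - \frac{1}{n}\sum_{i=1}^n h(\xx_i)\right),
\end{equation*}
so that the claim is equivalent to $\Phi(\sample)\le 2\,\emrad_\sample(\HH) + 3\sqrt{\log(2/\delta)/2n}$ with probability at least $1-\delta$, simultaneously giving the stated inequality for every $h\in\HH$. First I would observe that, because every $h\in\HH$ takes values in $[0,1]$, replacing a single sample point $\xx_i$ by any $\xx_i'$ changes $\Phi(\sample)$ by at most $1/n$; hence McDiarmid's bounded-differences inequality gives, with probability at least $1-\delta/2$,
\begin{equation*}
  \Phi(\sample)\le \Exp_{\sample}[\Phi(\sample)] + \sqrt{\frac{\log(2/\delta)}{2n}}.
\end{equation*}

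The next step is the symmetrization bound $\Exp_{\sample}[\Phi(\sample)]\le 2\,\Exp_{\sample}[\emrad_\sample(\HH)]$. I would introduce an independent ghost sample $\sample' = \{\xx_i'\}_{i=1}^n$, rewrite $\Exp[h(\xx)] = \Exp_{\sample'}[\tfrac1n\sum_i h(\xx_i')]$, pull the supremum outside the ghost expectation via Jensen's inequality to obtain $\Exp_{\sample}[\Phi(\sample)]\le \Exp_{\sample,\sample'}\sup_{h\in\HH}\tfrac1n\sum_i\bigl(h(\xx_i') - h(\xx_i)\bigr)$, and then, using that each summand is an antisymmetric function of the pair $(\xx_i,\xx_i')$, insert i.i.d.\ Rademacher signs $\sigma_i$ without changing the joint law (conditioning on $\sigmas$ and swapping $\xx_i\leftrightarrow\xx_i'$ whenever $\sigma_i=-1$). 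Splitting the supremum over the two halves of the difference and using that the $\xx_i'$ part has the same distribution as the $\xx_i$ part yields $\Exp_{\sample}[\Phi(\sample)]\le 2\,\Exp_{\sample,\sigmas}\sup_{h\in\HH}\tfrac1n\sum_i\sigma_i h(\xx_i) = 2\,\Exp_{\sample}[\emrad_\sample(\HH)]$.

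Finally I would replace the expected empirical Rademacher complexity by its observed value. Regarded as a function of $\sample$, $\emrad_\sample(\HH)$ again has bounded differences at most $1/n$ (for the same $[0,1]$-range reason, since perturbing one coordinate perturbs the supremand uniformly by $\le 1/n$), so a second application of McDiarmid gives, with probability at least $1-\delta/2$, $\Exp_{\sample}[\emrad_\sample(\HH)]\le \emrad_\sample(\HH) + \sqrt{\log(2/\delta)/2n}$. A union bound over the two failure events and chaining the three displayed estimates — noting that the factor $2$ in front of the Rademacher term also multiplies the second $\sqrt{\log(2/\delta)/2n}$, producing $1+2=3$ — gives the claimed bound. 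The one step that needs genuine care rather than bookkeeping is the symmetrization: justifying that inserting the Rademacher variables leaves the joint distribution unchanged and that $\sup_h \tfrac1n\sum_i\sigma_i(h(\xx_i')-h(\xx_i))\le \sup_h\tfrac1n\sum_i\sigma_i h(\xx_i') + \sup_h\tfrac1n\sum_i(-\sigma_i)h(\xx_i)$ before taking expectations; the two bounded-difference verifications and the union bound are routine.
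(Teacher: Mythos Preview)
Your argument is correct and is exactly the standard McDiarmid--symmetrization--McDiarmid proof of this data-dependent Rademacher bound. Note, however, that the paper does not give its own proof of this lemma: it is stated in the ``Technical Tools'' appendix as a cited result from \citet{bartlett2002rademacher} and used as a black box, so there is no paper proof to compare against --- your proposal simply supplies the classical derivation that the paper omits.
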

Ledoux-Talagrand's contraction lemma is a useful technique in upper bounding the Rademacher complexity of function compositions:
\begin{lemma}[Ledoux-Talagrand's contraction lemma]
Let $\phi:\RR\mapsto\RR$ be a Lipschitz function with parameter $L$, i.e., $\forall a,b\in\RR$, $|\phi(a) - \phi(b)|\leq L|a-b|$. Then, 
\begin{equation*}
    \emrad_\sample(\phi\circ \HH) = \Exp_{\sigmas}\bigg[\sup_{h\in\HH}\frac{1}{n}\sum_{i=1}^n \sigma_i \phi(h(\xx_i))\bigg] \leq L~\Exp_{\sigmas}\bigg[\sup_{h\in\HH}\frac{1}{n}\sum_{i=1}^n \sigma_i h(\xx_i)\bigg] = L~\emrad_\sample(\HH),
\end{equation*}
where $\phi\circ\HH\defeq\{\phi\circ h~\mid~ h\in\HH\}$ is the class of composite functions.
\end{lemma}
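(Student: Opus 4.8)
The plan is to prove the contraction inequality by the standard one-coordinate-at-a-time peeling argument, reducing the whole claim to a single-variable \emph{desymmetrization} step. Dropping the harmless $1/n$ factor, I want to establish
\[
\Exp_{\sigmas}\Big[\sup_{h\in\HH}\sum_{i=1}^n \sigma_i\, \phi(h(\xx_i))\Big] \le L\,\Exp_{\sigmas}\Big[\sup_{h\in\HH}\sum_{i=1}^n \sigma_i\, h(\xx_i)\Big].
\]
First I would isolate the core single-variable lemma: for any fixed function $u:\HH\to\RR$, any fixed point $x$, and a single Rademacher variable $\sigma$,
\[
\Exp_{\sigma}\Big[\sup_{h\in\HH}\big(u(h)+\sigma\,\phi(h(x))\big)\Big] \le \Exp_{\sigma}\Big[\sup_{h\in\HH}\big(u(h)+L\sigma\,h(x)\big)\Big].
\]
Granting this, the full statement follows by induction on an index $k$: setting $u(h)=\sum_{i<k}L\sigma_i h(\xx_i)+\sum_{i>k}\sigma_i\phi(h(\xx_i))$ and conditioning on all $\sigma_j$ with $j\neq k$, one application of the core lemma (together with Fubini to integrate out the remaining $\sigma_j$) converts the $k$-th summand from $\sigma_k\phi(h(\xx_k))$ into $L\sigma_k h(\xx_k)$. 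Chaining these $n$ conversions transforms the left-hand side into the right-hand side, and dividing by $n$ closes the argument.

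Next I would prove the core single-variable lemma by expanding the expectation over $\sigma\in\{+1,-1\}$ and merging the two resulting suprema into a joint supremum over a pair $(h,h')$. Concretely,
\[
\Exp_{\sigma}\Big[\sup_{h}\big(u(h)+\sigma\phi(h(x))\big)\Big] = \tfrac{1}{2}\sup_{h,h'}\Big(u(h)+u(h')+\phi(h(x))-\phi(h'(x))\Big).
\]
The Lipschitz hypothesis gives $\phi(h(x))-\phi(h'(x))\le L\,|h(x)-h'(x)|$, so the bracketed quantity is bounded by $u(h)+u(h')+L\,|h(x)-h'(x)|$. Since this expression is symmetric under swapping $h\leftrightarrow h'$, its supremum is attained (in the limit) on the region $h(x)\ge h'(x)$, where the absolute value equals $h(x)-h'(x)$; relaxing the constraint $h(x)\ge h'(x)$ can only enlarge the supremum, which then factors as $\sup_{h}\big(u(h)+Lh(x)\big)+\sup_{h'}\big(u(h')-Lh'(x)\big)$. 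Halving this sum is exactly $\Exp_{\sigma}\big[\sup_{h}(u(h)+L\sigma h(x))\big]$, the desired bound.

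The main obstacle is the sign bookkeeping in this desymmetrization step, specifically justifying that $|h(x)-h'(x)|$ may be replaced by $h(x)-h'(x)$ without losing the inequality. The point is that this relies on the symmetry of $u(h)+u(h')$ under the swap $h\leftrightarrow h'$ combined with the observation that dropping the constraint $h(x)\ge h'(x)$ only increases the supremum; a naive argument that ignores this symmetry would appear to require $\phi$ monotone, which is \emph{not} assumed. A secondary technicality is measurability and attainment of the suprema: if a supremum is not attained I would instead work with $\varepsilon$-optimal pairs and let $\varepsilon\to 0$, which leaves the structure of the argument unchanged.
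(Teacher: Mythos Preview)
The paper does not prove this lemma; it appears in the Technical Tools appendix as a cited classical result (attributed to Ledoux--Talagrand) and is simply invoked in the proof of Lemma~\ref{lemma:source}. There is therefore no paper proof to compare against.

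Your proposal is the standard, correct argument: reduce to a single coordinate by conditioning and induction, then prove the one-variable contraction by expanding the Rademacher expectation into a joint supremum over $(h,h')$, applying the Lipschitz bound, and using the symmetry of $u(h)+u(h')+L|h(x)-h'(x)|$ under the swap $h\leftrightarrow h'$ to drop the absolute value before re-factoring. Your explicit remark that the symmetry (rather than any monotonicity of $\phi$) is what justifies replacing $|h(x)-h'(x)|$ by $h(x)-h'(x)$ is exactly the subtle point in this proof, and you have it right. The $\varepsilon$-optimizer workaround for non-attained suprema is also the standard fix and causes no issues.
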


Lin's lemma gives an upper bound of the JS divergence between two distributions via the $L_1$ distance (total variation distance).
\begin{lemma}[Theorem. 3,~\citep{lin1991divergence}]
\label{lemma:lin}
Let $\domain$ and $\domain'$ be two distributions, then $\jsd(\domain, \domain')\leq \frac{1}{2}||\domain - \domain'||_1$.
\end{lemma}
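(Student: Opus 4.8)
The plan is to reduce the claimed global inequality to a single pointwise inequality in two nonnegative variables and then dispatch the latter with a concavity argument. Writing $p$ and $q$ for the densities (or mass functions) of $\domain$ and $\domain'$ and $m = (p+q)/2$ for that of $\domain_M$, I would first expand the definition of the JS divergence, with logarithms taken base $2$ so that the stated normalization $\jsd\le 1$ and the constant $1/2$ are mutually consistent:
\begin{equation*}
\jsd(\domain~||~\domain') = \frac{1}{2}\int \left( p\log_2\frac{2p}{p+q} + q\log_2\frac{2q}{p+q}\right).
\end{equation*}
Since $\frac{1}{2}||\domain - \domain'||_1 = \frac{1}{2}\int |p-q|$, it then suffices to establish the pointwise bound $\phi(a,b)\le |a-b|$ for all $a,b\ge 0$, where $\phi(a,b)\defeq a\log_2\frac{2a}{a+b} + b\log_2\frac{2b}{a+b}$ (with the convention $0\log_2 0 = 0$); integrating this estimate over the sample space and halving both sides yields the lemma.

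Next I would exploit that both $\phi(a,b)$ and $|a-b|$ are positively homogeneous of degree one in $(a,b)$. After discarding the trivial point $a=b=0$, I can normalize $a+b=1$ and set $t=a\in[0,1]$, $b=1-t$. A direct computation collapses $\phi$ to $\phi(t,1-t)=1-H_2(t)$, where $H_2(t)=-t\log_2 t-(1-t)\log_2(1-t)$ is the binary entropy, while $|a-b|=|2t-1|$. Hence the whole lemma reduces to the one-dimensional inequality $1-H_2(t)\le |2t-1|$ for all $t\in[0,1]$.

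Finally I would prove this reduced inequality using concavity of $H_2$. Rewriting $|2t-1| = 1-2\min(t,1-t)$, the claim becomes $H_2(t)\ge 2\min(t,1-t)$. The right-hand side is the tent-shaped, piecewise-linear function that takes the values $0,1,0$ at $t=0,1/2,1$: on $[0,1/2]$ it is the chord of the graph of $H_2$ through $(0,0)$ and $(1/2,1)$, and on $[1/2,1]$ the chord through $(1/2,1)$ and $(1,0)$, both of whose endpoints lie on the graph since $H_2(0)=H_2(1)=0$ and $H_2(1/2)=1$. Because $H_2$ is concave on $[0,1]$, it lies above each of its chords, giving $H_2(t)\ge 2\min(t,1-t)$ on the whole interval; equality at $t\in\{0,1/2,1\}$ recovers the tightness of the bound in the disjoint-support and identical-distribution cases.

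The only genuinely substantive step is this final concavity comparison; expanding the definition, performing the homogeneity normalization, and integrating the pointwise estimate are all routine. The main subtlety to handle carefully is that the chord domination must be invoked separately on $[0,1/2]$ and $[1/2,1]$, since the tent function is not a single chord across the entire interval.
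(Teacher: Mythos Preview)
Your proof is correct. The paper does not actually prove this lemma; it merely states it as a technical tool with a citation to \citet{lin1991divergence} (Theorem~3 there), so there is no ``paper's own proof'' to compare against. Your argument is a clean, self-contained derivation: the reduction to the pointwise inequality $\phi(a,b)\le |a-b|$, the homogeneity normalization to $a+b=1$, and the identification $\phi(t,1-t)=1-H_2(t)$ are all sound, and the final concavity step---comparing $H_2$ to the tent function $2\min(t,1-t)$ via chord domination on $[0,1/2]$ and $[1/2,1]$ separately---is exactly the right way to finish. The only remark worth adding is that your choice of base-$2$ logarithms is indeed consistent with the paper's normalization $0\le\jsd\le 1$, so the constant $1/2$ comes out correctly.
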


\begin{lemma}[Data processing inequality]
Let $X\rightarrow Z\rightarrow Y$ be a Markov chain, then $I(X;Z)\geq I(X; Y)$, where $I(\cdot;\cdot)$ is the mutual information.
\label{lemma:dpi}
\end{lemma}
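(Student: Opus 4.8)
The plan is to prove the inequality via the chain rule for mutual information together with the nonnegativity of conditional mutual information. The starting point is to translate the Markov assumption $X \to Z \to Y$ into its information-theoretic form: it is equivalent to the conditional independence $X \perp Y \mid Z$, which in turn is equivalent to the statement that the conditional mutual information vanishes, $I(X; Y \mid Z) = 0$. I would state this equivalence first, since it is the only place where the Markov structure enters the argument.

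Next, I would expand the three-way mutual information $I(X; (Y, Z))$ in two distinct ways using the chain rule:
\begin{align*}
I(X; (Y,Z)) &= I(X; Z) + I(X; Y \mid Z), \\
I(X; (Y,Z)) &= I(X; Y) + I(X; Z \mid Y).
\end{align*}
Equating the two right-hand sides gives $I(X;Z) + I(X;Y\mid Z) = I(X;Y) + I(X;Z\mid Y)$. I then substitute the Markov condition $I(X;Y\mid Z) = 0$ and invoke the nonnegativity of conditional mutual information, $I(X;Z\mid Y)\geq 0$ (itself a consequence of Jensen's inequality applied to the convex KL divergence, or equivalently the nonnegativity of an averaged relative entropy). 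This immediately yields
\[
I(X;Z) = I(X;Y) + I(X;Z\mid Y) \geq I(X;Y),
\]
which is exactly the claim.

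As for obstacles, there is essentially no difficulty here beyond bookkeeping, since the result is classical. The only points demanding care are (i) verifying that the Markov chain hypothesis indeed delivers $I(X;Y\mid Z)=0$ rather than some weaker statement, and (ii) justifying the chain rule decomposition and the nonnegativity of conditional mutual information, both of which follow directly from the definitions and the convexity of the KL divergence. Since these are standard facts of information theory, I would cite them rather than reprove them, and keep the proof to the short two-line chain-rule argument above.
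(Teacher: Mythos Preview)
Your proof is correct and is the standard textbook argument for the data processing inequality. The paper itself does not prove this lemma: it is listed in the ``Technical Tools'' appendix as a known result and is invoked without proof, so there is no paper proof to compare against; your chain-rule derivation is exactly the classical justification one would expect.
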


\section{Additional Experiments}
\label{sec:additional}
In order to further validate our claims, we artificially unbalance the label distribution on the source domain by removing samples from the dataset. We perform two such modifications:
\begin{itemize}
    \item \textbf{Unbalanced digits} In our first experiment, the source domain is MNIST, from which we randomly remove $70\%$ of the first five classes (corresponding to digits $0$ through $4$) while leaving the other classes untouched. The target domain is the full USPS dataset.
    \item \textbf{Unbalanced zeros and ones} In our second experiment, the source domain is still MNIST. We remove $70\%$ of the $0$ class and all the classes above $2$ entirely. We still target the USPS dataset, but also remove digits $2$ to $9$ in that dataset.
\end{itemize}
The results of the DANN domain adaptation algorithm on those tasks are plotted in Figure~\ref{fig:dann2}. They confirm the theoretical and experimental findings from the main text. The effect is however enhanced due to a much larger discrepancy between the label distributions (a fact predicted by our theory). Those plots are the mean across 5 seeds, the standard deviation over those 5 runs is significantly lower than the observed trend.

\begin{figure*}[htb]
    \centering
    \begin{subfigure}[b]{0.45\linewidth}
        \includegraphics[width=\linewidth]{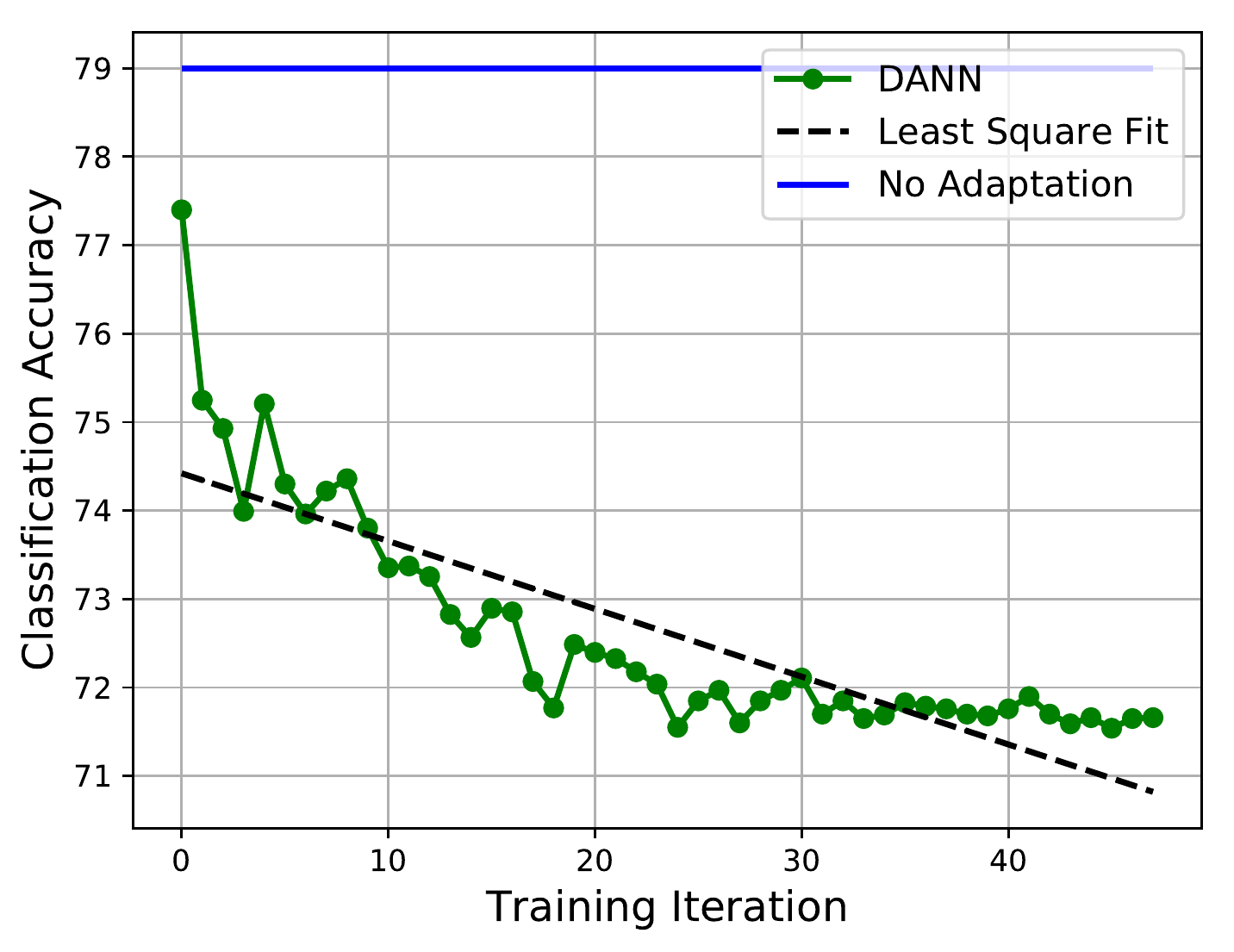}
        \caption{Unbalanced digits}
    \end{subfigure}
    ~
    \begin{subfigure}[b]{0.45\linewidth}
        \includegraphics[width=\linewidth]{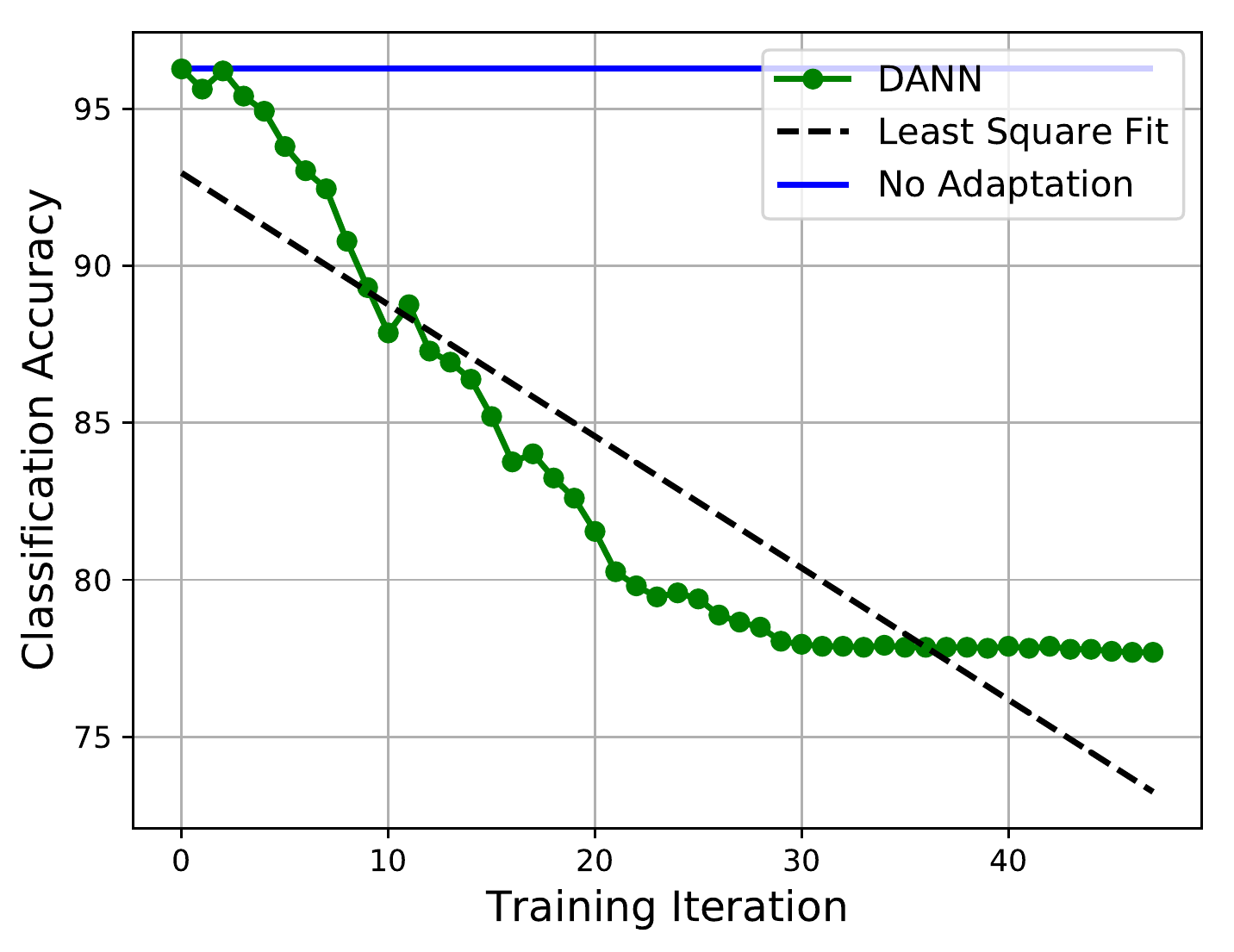}
        \caption{Unbalanced zeros and ones}
    \end{subfigure}
\caption{Digit classification on the unbalanced MNIST to USPS domain adaptation tasks described above. The horizontal solid line corresponds to the target domain test accuracy without adaptation. The green solid line is the target domain test accuracy under domain adaptation with DANN. We also plot the least square fit (dashed line) of the DANN adaptation results to emphasize the negative slope.}
\label{fig:dann2}
\end{figure*}

\end{document}